\newtheorem{theorem}{Theorem}[section]
\newtheorem{remark}[theorem]{Remark}
\newtheorem{definition}[theorem]{Definition}
\renewcommand{\paragraph}[1]{\textbf{#1}}
\newcommand{\R}{\mathbb{R}}
\newcommand{\X}{\mathcal{X}}
\renewcommand{\H}{\mathcal{H}}
\renewcommand{\P}{\mathbb{P}}
\newcommand{\set}[1]{\{#1\}}
\newcommand{\todo}[1]{\textcolor{red}{TODO #1}}
\newcommand{\ccolor}[1]{{\leavevmode #1}}
\newcommand{\sgn}{\text{sign}}
\renewcommand{\smash}[1]{#1}
\newcommand{\1}{\mathbbm{1}}
\renewcommand{\outer}{h_\text{outer}}
\newcommand{\inner}{h_\text{inner}}
\renewcommand{\outer}{h_\text{out}}
\renewcommand{\inner}{h_\text{inn}}
\newcommand{\HCinner}{\HC_{\text{inner}}}
\newcommand{\HCouter}{\HC_{\text{outer}}}
\renewcommand{\HCinner}{\HC_{\text{inn}}}
\renewcommand{\HCouter}{\HC_{\text{out}}}
\newcommand{\deps}{\#\text{deps}}
\newcommand{\yearsw}{\text{wait yrs}}
\newcommand{\yearsg}{\text{lyg}}
\DeclareMathOperator*{\Prob}{\mathbb{P}}
\renewcommand{\exp}{\operatorname{exp}} 
\newcommand{\Y}{\mathcal{Y}}
\newcommand{\HC}{\mathcal{H}}
\newcommand{\F}{\mathcal{F}}
\DeclareMathOperator{\logistic}{logistic}
\DeclareMathOperator{\logit}{logit}
\DeclareMathOperator{\interior}{int}
\newcommand{\wv}{\bm{w}}
\title{Towards Cognitively-Faithful Decision-Making Models to Improve AI Alignment}
\author{Cyrus Cousins\thanks{Correspondence to \texttt{originalcyruscousins@gmail.com}}\\Duke University
\And
Vijay Keswani\\IIT Delhi
\And
Vincent Conitzer\\CMU
\And
Hoda Heidari\\CMU
\AND
Jana Schaich Borg\\Duke University
\And
Walter Sinnott-Armstrong\\Duke University
}
\renewcommand{\todo}[1]{}
\newif\iflongversion
\begin{document}

\maketitle

\begin{abstract}
Recent AI trends seek to align AI models to learned \emph{human-centric objectives}, such as personal preferences, utility, or societal values.
Using standard preference elicitation methods, researchers and practitioners build models
of
human decisions and judgments,
to which AI models are aligned.
%
However,  
standard elicitation
methods
often
fail to
capture the
cognitive processes
behind human decision making,
such as 
\emph{heuristics} or simplifying \emph{structured thought patterns}. 
%
To address this failure, we take an axiomatic approach to learning \emph{cognitively faithful} decision processes from pairwise comparisons. 
Building on the 
literature analyzing 
cognitive processes that
shape
human decision-making,
we derive a
model class
in which
features are
first \emph{processed} with learned rules, 
then \emph{aggregated} via a fixed rule, such as the Bradley-Terry rule, to produce a decision. 
This structured processing of information ensures that such models are realistic and feasible candidates to represent underlying human decision-making processes.
We demonstrate the efficacy of this modeling approach by learning interpretable models of human decision making in a kidney allocation task, and show that our proposed models match or surpass the accuracy of prior models of human pairwise decision-making.

\todo{[fewer adjectives] [interpretable vs. cognitively faithful] [less hedging]}

\end{abstract}

\section{Introduction}

%
Computational models of human cognition
allow us to quantify and study the factors that influence our decisions, and when accurate, 
they help explain commonalities in human decision processes across different tasks \citep{gigerenzer2000simple, crockett2016computational}.
%
%
%
%
This may also be why computational models to predict human behavior have found applications in personalized AI tools in various settings, e.g., healthcare, autonomous vehicles, and resource allocation, to increase users' confidence that AI will be aligned with their preferences 
\citep{ji2023ai,kim2018computational,noothigattu2019teaching}. 
%
%

%
Despite
promising use cases, current AI models of human decision-making typically do not attempt to faithfully replicate
cognitive
processes.
%
Modern AI tools built using preference elicitation \citep{lee2019webuildai,awad2018moral,johnston2023deploying,freedman2020adapting,rafailov2023direct}
and reinforcement learning
\citep{ng2000algorithms,christiano2017deep,kaufmann2023survey} implicitly assume that a reward/objective model based on a prespecified hypothesis class can accurately predict human decisions, but such tools are agnostic about how faithful these models are to
cognitive
processes.
However, recent works 
highlight the lack of suitable 
hypothesis classes 
to capture human choices or their values \citep{stray2021you,boerstler2024stability,leike2018scalable}.
Unsuitable modeling classes introduce the possibility of inappropriate optimization formulations and   \textit{reward hacking} \citep{pan2022effects,hubinger2019risks}, and can lead to arbitrarily erroneous inferential models learned from human responses \citep{zhuang2020consequences,pan2022effects}.
%
%
%
%
Additionally, such misaligned models are limited in their explanatory power, 
%
undermining any \textit{intrinsic trust} placed in the AI's decisions \citep{jacovi2021formalizing}.
%
%
%
These problems are further exacerbated when AI is used in high-stakes domains, including moral domains such as healthcare and sentencing \citep{sinnott2021ai,kim2018computational},
where 
stakeholders often expect an AI to justify its decision in a similar manner and to the same extent as humans \citep{lima2021human}.
\ccolor{Modeling done in prior moral decision-making contexts either favors simple model classes --- e.g., linear models and decision trees \citep{lee2019webuildai,noothigattu2018voting,freedman2020adapting}, whose fit for human decision-making is highly context-specific \citep{ganzach2001nonlinear,zorman1997limitations}, 
%
or employs uninterpretable classes --- e.g., neural networks or random forests \citep{wiedeman2020modeling,ueshima2024discovering} --- whose decision processes cannot be validated due to their uninterpretable designs.%
\todo{move to related work?}}
%
A qualitative study by \citet{keswani2025can} note this \textit{process misalignment} as a crucial issue around AI-trust in morally-salient tasks like kidney transplant allocation, with one study participant expressing concern: 
\emph{``
they don’t think like a human\ldots{} what they might think should be ranked as priority, I may not''.}

Such human concerns highlight
that, for certain real-world applications, building trustworthy AI tools requires accurately simulating human decision processes.
%
However,
human decision processes
are often not captured
by standard hypothesis classes.
%
For example,
consider humans' use of decision \textit{heuristics}, defined by \citet{gigerenzer2011heuristic} as a strategy ``\textit{that ignores part of the information, with the goal of making decisions more quickly, frugally, and/or accurately than more complex methods}.''
%
\Citet{keswani2025can} document several heuristics people use when deciding which patient should get an available kidney.
%
Their
study participants often used a form of \textit{hiatus heuristic}, employing \emph{thresholds} to isolate relevant patient feature information; e.g.,
some treated ``number of dependents'' as a binary (none or any), disregarding the actual \emph{quantity}.
%
Similarly, rather than contrast patients holistically, some participants used simple aggregation methods, such as the \textit{tallying heuristic}, which simply counts the number of features favoring each patient to make
their choices.
%
While the role of such heuristics in human decision processes is well established \citep{gigerenzer2011heuristic,shah2008heuristics,gigerenzer2000simple}, 
models learned from human decisions using standard hypothesis classes often don't capture or explain the heuristics people use to make decisions.
%
For example, if a decision-maker uses the threshold-based heuristic mentioned above, linear models would fail to capture it, while neural networks or multivariate monotonic regressors may learn the heuristic, but would fail to explain its role in the decision process, instead representing it as some approximately-equivalent but hopelessly opaque process.
%
%
To explain or simulate human decision processes accurately, we need methods to learn and represent these heuristics (and other decision-making nuances).
%
To that end, this paper explores hypothesis classes that more accurately capture the cognitive processes people use to make their decisions in pairwise comparison settings.

This work presents
a learning framework for computational models of pairwise human decision-making, 
which is motivated by
prior work on human cognition.
%
Classical \emph{theoretical} axiomatic models of choice 
impose strong structural assumptions on preferences, 
e.g., strong \emph{transitivity} concepts and \emph{independence of irrelevant alternatives} \citep{vonNeumann1944theory,luce1959individual}.
However, decades of \emph{empirical work} demonstrate that these assumptions are systematically violated in human decision-making, particularly in comparative and context-dependent settings \citep{tversky1969intransitivity,tversky1974judgment
}.
We thus
propose a set of \emph{substantially weaker} axioms (\cref{sec:models}) that constrain how comparative judgments are made, and from these axioms, we derive a class of feasible \emph{cognitively faithful} decision-making models.
This allows us to retain interpretability and theoretical grounding
without contradicting
empirically observed heuristic decision processes.
Indeed, because our axioms 
are weaker than classical alternatives, 
they do not
\emph{fully specify} a decision making process, but rather,
they \emph{constrain the feasible space} of decision making
.
The approach is pragmatic: 
Our axioms are somewhat prescriptive,
as they represent normatively-desirable attributes of decision making, but 
they also have a descriptive element, as they relax existing 
classical axioms
that were themselves attempts
to describe human behavior. 
Similarly, the violation of our axioms is \emph{not necessarily} a critical failure of human reasoning, but as relatively
simple claims, the axioms can be understood more easily than complex models, and they can be studied qualitatively and assessed for their replicability in AI systems that 
align to human decisions.

Given any pairwise comparison between two options $x_1, x_2 \in \R^d$,
we characterize the decision-maker's preference by
a
two-stage process (\cref{sec:decision_rules}).
The first stage
captures the \textit{editing rules} that the decision-maker employs over individual features $\smash{x_1^{(j)}\!,\, x_2^{(j)}}$ to process/simplify/transform the information presented in feature $j \in [d]$.
%
Each feature $j$ may have its own decision rule, and these rules work towards transforming $\smash{x_i^{(j)}}$ in a manner that reflects the feature's contribution to the decision-maker's final choice (e.g., whether the decision-maker 
thresholds, ignores, linearly transforms, or otherwise processes the feature).
We allow this transformation to be \textit{contextual} in nature, whereby the value of one or more features can influence the transformation used for another feature.
Such \textit{conditional transformations} capture feature interactions in our hypothesis class, increasing the models' expressiveness while ensuring that we still learn feature-level decision rules.
%
The second stage captures the decision rule that aggregates the processed features to choose the \textit{preferred option}.
%
This aggregation rule can be as simple as the \textit{tallying heuristic} \citep{gigerenzer2000simple}
or as complicated as \textit{Bradley-Terry} (\citeyear{bradley1952rank}) \textit{aggregation}
for probabilistic preferences. 
%
%
%
Beyond the motivations from prior works in cognitive science and preference modeling,
%
we show that our axiomatic basis 
%
yields the two-stage model class (\cref{sec:axioms}).
Moreover, more stringent assumptions yield special cases of interest, such as logistic regression, probit regression, and monotonic decision rules.
%
Finally, we assess the empirical efficacy of our proposed framework over synthetic and real-world datasets in the kidney allocation domain (\cref{sec:experiments}), a domain where users have said cognitive process alignment is particularly important. 
%
We find
that our method
can
learn models that explain the decision rules that people use for kidney allocation decisions, while ensuring that the learned model has similar or better predictive accuracy than other modeling approaches.

%

\vspace{-.125cm}
\section{Related Work}
\vspace{-.125cm}
%
%
Methods for learning preferences from
comparative choices
have been proposed in many contexts, such as for recommender systems \citep{kalloori2018eliciting, guo2010real, chen2004survey}, language model personalization \citep{rafailov2023direct,jiang2024survey,ziegler2019fine,ouyang2022training},
and
explanatory frameworks 
in psychology \citep{lichtenstein2006construction,ben2019foundations,charness2013experimental}.
Recently, preference learning and elicitation of stakeholder preferences has flourished in the context of
\textit{human-
centric AI} and AI alignment  
\citep{jiang2024survey, ji2023ai, capel2023human, feffer2023preference, kim2018computational, johnston2023deploying, sinnott2021ai, Liscio2024, lee2019webuildai}.
%
%
\todo{Across these use cases,}%
The usability of learned models in real settings relies on both their predictive accuracy and their alignment to humans' decision-making \citep{mukherjee2024optimal,keswani2025can, capel2023human,lima2021human}.
%
%
%
However, modeling strategies in most prior AI alignment work are agnostic about whether they actually capture human decision-making processes.
Our work fills this gap by proposing modeling classes explicitly motivated by the decision rules people
report for pairwise comparisons.
\iflongversion

\fi
Like this work, \citet{noothigattu2020axioms} study the class of binary decision rules
arising from 
pairwise comparisons. However, their axioms
concern MLE estimation over 
data, and thus are \emph{distribution sensitive},
whereas our axioms dictate laws as to how probabilistic preferences must behave \emph{over 
their domain}. 
They also do not 
characterize model classes
arising
from their axioms, but rather study whether known classes satisfy them, making their work more taxonomically descriptive than prescriptive\todo{is that fair?}.
\Citet{ge2024axioms} continue this line of inquiry, again with axioms related to estimation from datasets.

There have been other efforts to computationally model human decision-making.
\citet{bourgin2019cognitive} propose models bounded by theoretical properties of human decision-making that are fine-tuned with real-world data,
but they aim
for accurate predictions, 
rather than
cognitive fidelity.
%
\iflongversion
\ccolor{
todo: rephrase \citet{rosenfeld2012combining} model agents solving optimization problems, sharing our motivation of accurately simulating cognitive processes. For the settings they consider, the optimization objectives are known. In contrast, we model human decision-making in comparative settings where the decision objectives are unavailable to the elicitation framework and can differ across users. 
}\fi%
\citet{peterson2021using} use neural networks to implement and test cognitive models of
participant choices for 
pairs of gambles.
This analysis provides information on the \textit{fit} of various cognitive models, but does not provide a mechanism to \emph{learn} the decision-making process from an individual's data.
%
\ccolor{\citet{plonsky2017psychological} and \citet{payne1988adaptive} use
psychological theories to
select feature transformations or simulate prespecified heuristics 
to obtain gains in predictive power by curating the learning tasks to be more cognitive aligned with human decision processes.
%
However, the applicability of these methods is limited 
when feature transformations are unknown \emph{a priori} and vary across individuals. 
}%
%
Our work also uses relevant works in psychology to identify appropriate decision rule assumptions, but 
we learn feature transformations from human decisions. 
%
Other works use neural networks alongside theory-driven cognitive models to simulate human decision-making \citep{fintz2022using,lin2022predicting},
but such models of learned decision processes are largely uninterpretable.
%
Several studies also illustrate 
that simple heuristic-based models predict human responses
well in classification 
\citep{holte1993very,csimcsek2015learning, brighton2006robust, czerlinski1999good,dawes1979robust}, supporting our goal of learning
heuristics from decisions.
Yet, the heterogeneity of heuristics used by different people makes it difficult to find a single model to accurately simulate the responses for an entire population.
Our work addresses this issue by learning 
individual-level decision rules from data.



\section{%
A Model of Cognitively-Faithful Decision-Making%
} \label{sec:models}

In our setting, a decision-maker is presented with a pairwise comparison $(x_1, x_2)$ between two options
from the domain $\X \subseteq \R^d\!$, i.e. each with $d$ descriptive features. 
%
For each $i \in [d]$, let $\X_i \subseteq \R$ denote the input space for feature $i$, such that $\X \doteq \X_1 \times \cdots \times \X_d$.
%
%
%
%
%
%
%
%
%
%
The decision-maker's 
response function 
$H: \X \times \X \rightarrow [0,1]$
denotes the probability of choosing the first option $x_1$ for any given pairwise comparison $(x_1, x_2)$. 
%
%
%
%
%
%
To learn $H$ from observed data, suppose we have a dataset $S$ containing the decision-maker's responses to $N$ pairwise comparisons of the kind $(x_1, x_2, r)$, 
where $r \in \{0,1\}$,
is the binary decision, with $r=1$ if the decision-maker deterministically chose $x_1$, and 0 otherwise.
%
%


%
Given dataset $S$, we aim to learn a model $\smash{\hat{H}}$ that accurately simulates the decision-maker's response function.
Taking a general learning approach to this problem, given a hypothesis class of models $\H$, we can search for a model from $\H$ that minimizes the predictive loss over $S$\iflongversion
; i.e., 
$\smash{\hat{H}} \doteq \smash{\arg\min_{H' \in \H}} \P_{(x_1, x_2, r) \in S}[H'(x_1, x_2) \neq r].$
TODO product???
\else
.
\fi
%
%
%
%
A common approach to learning $\smash{\hat{H}}$ is to start with \textit{standard} hypothesis classes, such as classes of linear functions,
decision trees,
or neural networks. 
%
However, as discussed earlier, the assumptions of these classes are likely not faithful to the cognitive processes people truly use to reach their own decisions \citep{keswani2025can}.
As such, when using these classes, the resulting model can be difficult to validate and may lead to erroneous predictions in cases where the model class cannot capture the computation required to reach the ``correct'' decision.
We aim to identify a hypothesis class that yields the most accurate model while faithfully representing humans' actual decision-making processes. Such models have the added benefit of being interpretable for human users.
To come up with cognitively faithful computational models,
we first discuss the computational properties commonly observed in human decision-making processes, and then use these properties to construct an appropriate hypothesis class.

\subsection{Rule-Based Decision-Making Models} \label{sec:decision_rules}
%
Prior works on cognitive models for human decision-making point to a reliance on \textit{decision rules} and \textit{heuristics} in comparative settings \citep{gigerenzer2000simple,gigerenzer2011heuristic,brandstatter2006priority,kahneman2013prospect,kahneman2005model}.
%
%
Based on this literature, we construct hypothesis classes that consist of hierarchical decision rules.
Our proposed strategy models decision-making for pairwise comparison of options $(x_1, x_2)$ as a two-step process. 
In the first step, the decision-maker processes each feature in $x_1$ and $x_2$ to \textit{edit} or transform the information presented by this feature.
In the second step, the decision-maker combines the edited information from all features to select the \textit{dominant} option. 
Our hypothesis class will consist of functions that reflect this hierarchical process.

\paragraph{Editing Rules} The decision rules or heuristic functions used in the first step are referred to as \textit{editing rules}.
    {Editing rules} operate at the level of individual features of each element in the given pair and operationalize how a decision-maker processes the given feature value.
    Let $\inner^{i}: \X_i \rightarrow \X_i'$ denote the editing rule for feature $i$, with $\X_i'$ the output domain post-editing.
    Examples of editing include feature simplification (e.g., zeroing out features considered irrelevant), transformation (e.g., log transformation for features with ``diminishing returns'' or scaling), or leaving the feature unchanged.
    %
    %

    While these rules are often \emph{structurally simple}, reflecting their use to reduce cognitive load \citep{gigerenzer2011heuristic},
    there can be significant heterogeneity in the editing used in different contexts. 
    For example, prior works 
    note \textit{feature interactions}, where the editing rule used for the $i$th feature $x^{(i)}$ can depend on the values of other features of $x$ \citep{keswani2025can}.
    To account for this, we will consider the choice of editing rule conditional on the \textit{decision context} features, denoted by the values of features in a set $\omega \subseteq [d]$. 
    %
    %
    %
    On one extreme, $\omega = \emptyset$, implying that each feature is edited independently (i.e., no feature interactions), as $\omega$ grows, model complexity increases as conditional interactions involve more features, and
    the other extreme, $\omega = [d]$, implies that the choice of editing function for each feature in $x$ can depend on the values of all other features. 
    \todo{Hence, in our simulations, we will limit $\omega$ to just one feature.}%
    %
%
    %
    %
    To account for this context $\omega$, 
    for any option $x \in \X$, we will denote the contextual editing rule operating over feature $i$ by $\smash{\inner^{i,x^{\omega_i}}}: \X_i \rightarrow \X_i'$, where $\omega_i = \omega \setminus \{i\}$.
    %
    %
\todo{TODO ?}
    
    \begin{remark}[Examples of Editing Rules]
        The use of editing rules is well-supported by 
        literature
        in behavioral economics and social psychology.
        \citet{montgomery1983decision} describes 
        the phase of ``separating relevant information from less relevant information which can be discarded,
        '' 
        and \citet{kahneman2013prospect} argue that 
        this allows for
        decision-making based only
        on the most essential information. 
        %
        \iflongversion
        Several works 
        also note
        significant heterogeneity in editing rules.
        \fi
        In some settings, 
        editing rules
        represent \emph{feature importance assignment} 
        \citep{payne1976task, gigerenzer2000simple}.
        In other cases, editing reflects \emph{feature transformations} to isolate
        information relevant
        to the 
        task \citep{ajzen1996social, shah2008heuristics}, e.g.,
        thresholding to discretize features,
        or 
        $\log$-transformation
        to model diminishing returns 
        \citep{tversky1972elimination, kubanek2017optimal}.
        %
    %
    \end{remark}
%
\paragraph{Dominance Testing}
In the second step of the decision-making process, the edited features are compared across the two options ($x_1, x_2$) and then combined to reach the final decision on which option is the \textit{dominant one}.
\todo{Contradicts what we do: ours is more like:}%
    %
    We will refer to this second step as the \textit{dominance testing rule}.
    Let $\outer: \X' \times \X' \rightarrow [0,1]$ denote the dominance test function, where $\X' = \X_1' \times \cdots \times \X_d'$.
    %

    \begin{remark}[Examples of Dominance Testing rules]
        %
        Several dominance testing rules have been studied in prior literature, including the \textit{tallying up} heuristic mentioned earlier 
        \citep{czerlinski1999good,gigerenzer2011heuristic} and the \textit{prominent feature} heuristic (choose the element favored by the most prominent feature for which there is non-zero difference in edited feature value
        ) \citep{persson2022prominence,tversky1988contingent}.
        %
        %
        Alternately, one could create additive or probabilistic functions to capture various dominance testing rules observed in practice, e.g., Bradley-Terry aggregation (\citeyear{bradley1952rank}). 
        %
    \end{remark}
With this setup, we model a decision maker's response to any pairwise comparison as first applying editing functions $\smash{\inner^{i, x^\omega_i}}$ over each feature $i$ for both options, given context $\omega$ and $\omega_i = \omega \setminus \{i\}$, and then combining the edited information to reach the final decision using the dominance testing rule $\outer$.
Hence, our proposed hypothesis class contains such two-stage models, namely 
%
%
%
{
\[
\HC \doteq \left\{ x_1, x_2 \mapsto
  \outer\left (\forall i \in [d], x^{(i)}_{1} \mapsto \smash{\inner^{i, x_1^{\omega_i}}(x^{(i)}_{1})}, x^{(i)}_{2} \mapsto \smash{\inner^{i, x_2^{\omega_i}}(x^{(i)}_{2})} \right)
  \,\middle|\, \inner^{\cdot, \cdot} \in \HCinner, \outer \in \HCouter
  \right\}.
\]}%
The properties that characterize 
classes $\HC_{\text{inner}}$ and $\HC_{\text{outer}}$ are discussed in the next section.

\subsection{Axiomatic Characterization of Decision Rules} \label{sec:models:axioms}\label{sec:axioms}

The above-described
two-stage process is motivated by extensive decision-making literature.
%
An additional compelling motivation for this hypothesis class comes from an axiomatic characterization of human decision processes for pairwise comparisons.
%
%
%
The axioms below describe simple properties expected to be satisfied in an ideal comparative choice model (independent of its functional form).
%
%
%
%
\begin{definition}[Axioms of Binary Choice
]
\label{def:axioms}
%
The following axioms describe a binary preference function $H(x_{1}, x_{2}): \X^{2} \to \Y$.
Unless otherwise stated, each must hold \emph{for all} $x_{1}, x_{2}, x_{3} \in \X$.
\begin{enumerate}
\item\label{def:axioms:comp} \textbf{Complementarity}:
The order in which two options are presented should not impact the option that is eventually selected.
Formally, we require that 
$H(x_{1}, x_{2}) = 1 - H(x_{2}, x_{1})$.

\item\label{def:axioms:wt} \textbf{Weak Transitivity} (WT): The comparison $H(x_{1}, x_{3})$
can be
expressed
as a
continuous
function of
the probabilities
$H(x_{1}, x_{2})$ and $H(x_{2}, x_{3})$, i.e., for some continuous
$f: 
 [0, 1]^{2} \to [0, 1]$, it holds that \scalebox{1}{$H(x_{1}, x_{3}) = f(H(x_{1}, x_{2}), H(x_{2}, x_{3}))$.\hspace{-1cm}\null}



\todo{Reversibility commented}

\item\label{def:axioms:cs} \textbf{Codomain Span}: \scalebox{0.98}[0.99]{For any $p \in (0, 1)$ and $x_{1} \in \X$, there exists $x_{2} \in \X$ such that $H(x_{1}, x_{2}) = p$.}

\todo{Is there exists some $x_{1} \in \X$ strong enough?}

\todo{
\textcolor{brown}{We used to have a weaker form:
\textbf{Codomain Span}: For any $p \in (0, 1)$, there exist $x_{1}, x_{2} \in \X$ such that $H(x_{1}, x_{2}) = p$.
}
}

\item\label{def:axioms:cic} \textbf{
Noninteractive Compositionality} (NC):
Suppose some $x_{1} \in \X$, and $x_{2}',x_{2}'',x_{2}''' \in \X$ such that $x_{2}'$ and $x_{2}''$ are obtained by changing different features of $x_{1}$, and $x_{2}'''$ is produced by applying both changes to $x_{1}$.
We then require that $H(x_{1}, x_{2}''')$ can be computed from $H(x_{1}, x_{2}')$ and $H(x_{1}, x_{2}'')$. 

\item\label{def:axioms:ci} \textbf{Conditionally Interactive Compositionality} (CIC): 
Given a set of 
\emph{condition features} $\omega \subseteq [d]$, suppose some $x_{1} \in \X$, and $x_{2}',x_{2}'',x_{2}''' \in \X$ such that $x_{2}'$ and $x_{2}''$ are obtained by changing different features (
neither in $\omega$) of $x_{1}$, and $x_{2}'''$ is produced by applying both changes to $x_{1}$.
We then require that $H(x_{1}, x_{2}''')$ can be computed from $H(x_{1}, x_{2}')$ and $H(x_{1}, x_{2}'')$. 

\end{enumerate}
\end{definition}



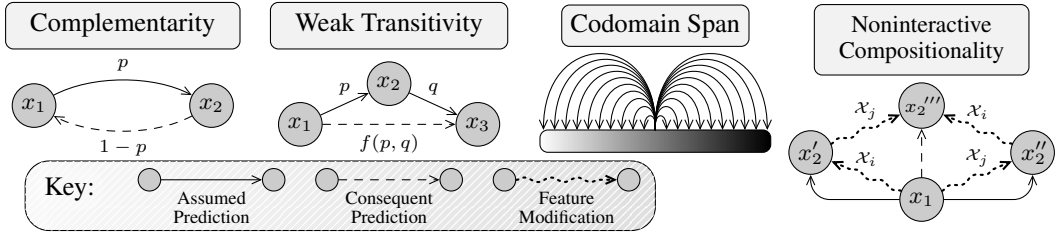
\begin{figure}
\begin{centering}
\begin{tikzpicture}[
    xscale=2.36,yscale=1.25,
    xi/.style={circle,draw=black,fill=gray!40!white,font=\small,inner sep=1pt,minimum size=1.65em},
    xiblank/.style={xi,minimum size=0.85em},
    xp/.style={circle,draw=black,fill=gray!50!green},
    gn/.style={rectangle,draw=black,fill=gray!50!orange},
    output/.style={rectangle,draw=black,fill=white},
    myarrowhead/.style={-{Straight Barb[length=1.2mm,width=1.1mm]}},
    arr/.style={draw=black,myarrowhead,tips=proper,line cap=round},
    arrmod/.style={arr,dotted,thick,decorate,decoration={name=snake,post length=1mm,amplitude=0.3mm}},
    propname/.style={rectangle,rounded corners=3pt,draw=black,fill=gray!10!white},
    keyitem/.style={font=\scriptsize},
]

\begin{scope}[shift={(0, 0)}]

\node[propname] (com) at (0, 0) {\begin{tabular}{c} Complementarity 
\end{tabular}};

\begin{scope}[shift={(0, 0.15)}]

\node[xi] (com-x1) at (-0.5, -1) {$x_{1}$};
\node[xi] (com-x2) at (0.5, -1) {$x_{2}$};

\draw[arr] (com-x1) edge[bend left=45] node[above]{\scriptsize $p$} (com-x2);
\draw[arr,dashed] (com-x2) edge[bend left=45] node[below]{\scriptsize $1-p$} (com-x1);

\end{scope}

\end{scope}

\begin{scope}[shift={((1.5,0))}]

\node[propname] (wt) at (0, 0) {\begin{tabular}{c} Weak Transitivity 
\end{tabular}};

\begin{scope}[shift={((0,-1.05))}]

\node[xi] (wt-x1) at (-0.5, 0) {$x_{1}$};

\node[xi] (wt-x2) at (0, {0.5*sin(60)}) {$x_{2}$};

\node[xi] (wt-x3) at (0.5, 0) {$x_{3}$};

\draw[arr] (wt-x1) edge node[above] {\scriptsize $p$} (wt-x2);
\draw[arr] (wt-x2) edge node[above] {\scriptsize $q$} (wt-x3);

\draw[arr,dashed] (wt-x1) edge node[below] {\scriptsize $f(p, q)$} (wt-x3);

\end{scope}

\end{scope}

\begin{scope}[shift={((3,0))}]

\node[propname] (ds) at (0, 0) {Codomain Span};

\begin{scope}[
    shift={{(0,-1.3)}},
    xscale=1,yscale=0.95,
    ]

\newcommand{\dsrad}{0.65}
\newcommand{\narrs}{10}

\draw[draw=black,rounded corners=3pt,left color=white,right color=black] (-\dsrad,-0.05) rectangle (\dsrad,0.2);

\foreach \i [evaluate=\i as \dsvo using 1.13*(\i/\narrs)^0.75]  in {1,2,...,\narrs} {






\newcommand{\dsxf}{0.97}


\draw[arr] (0,0.2) -- (0,0.25) .. controls (0,{{0.2+\dsvo}}) and ({{\dsxf*\i*\dsrad/\narrs}},{{0.2+\dsvo}}) .. ({{\dsxf*\i*\dsrad/\narrs}},0.2);
\draw[arr] (0,0.2) -- (0,0.25) .. controls (0,{{0.2+\dsvo}}) and ({{-\dsxf*\i*\dsrad/\narrs}},{{0.2+\dsvo}}) .. ({{-\dsxf*\i*\dsrad/\narrs}},0.2);

}

\end{scope}

\end{scope}

\begin{scope}[shift={((4.5,-0.15))}]

\node[propname] (ds) at (0, 0) { \small \begin{tabular}{c} 
 Noninteractive \\ Compositionality
 \end{tabular}};

\begin{scope}[
    shift={((0,-1.2))},
    xscale=1.25,
    ]

\node[xi] (cic-x1) at (0, -0.5) {$x_{1}$};

\node[xi] (cic-x2p) at (-0.5, 0) {\small$x_{2}'$};
\node[xi] (cic-x2pp) at (0.5, 0) {\small$x_{2}''\!$};

\node[xi] (cic-x2ppp) at (0, 0.5) {\scriptsize$x_{2}\!'''\!$};


\draw[arr,rounded corners=1.6ex] (cic-x1) -| (cic-x2p);
\draw[arr,rounded corners=1.6ex] (cic-x1) -| (cic-x2pp);


\draw[arr,dashed] (cic-x1) edge (cic-x2ppp);

\draw[arrmod] (cic-x1) -- node[above] {\tiny $\X_{i}$} (cic-x2p);
\draw[arrmod] (cic-x1) -- node[above] {\tiny $\X_{j}$} (cic-x2pp);

\draw[arrmod] (cic-x2p) -- node[above] {\tiny $\X_{j}$} (cic-x2ppp);
\draw[arrmod] (cic-x2pp) -- node[above] {\tiny $\X_{i}$} (cic-x2ppp);


\end{scope}

\end{scope}

\begin{scope}[
    shift={{(-0.75, -1.64)}}
    ]

\draw[draw=black,rounded corners=2ex,postaction={pattern={north east lines},pattern color=white},left color=white!95!gray,right color=white!75!gray] 
    (0.2, -0.54) rectangle (3.75, 0.20);


\node[propname,fill opacity=0.25,text opacity=1,fill=white,draw=none,rounded corners=1.5ex]
    (ds) at (0.45, -0.1) {Key:};

\newcommand{\keyoffset}{0.1}

\node[xiblank] (k1-x1) at (1.0-\keyoffset, 0.0) {\ }; 
\node[xiblank] (k1-x2) at (1.5+\keyoffset, 0.0) {\ }; 

\draw[arr] (k1-x1) edge node[below,keyitem] { \raisebox{0.25ex}{\begin{tabular}{c} Assumed \\[-0.3ex] Prediction \end{tabular}} } (k1-x2);

\node[xiblank] (k2-x1) at (2.0-\keyoffset, 0.0) {\ }; 
\node[xiblank] (k2-x2) at (2.5+\keyoffset, 0.0) {\ }; 

\draw[arr,dashed] (k2-x1) edge node[below,keyitem] { \raisebox{0.25ex}{\begin{tabular}{c} Consequent \\[-0.3ex] Prediction \end{tabular}} } (k2-x2);

\node[xiblank] (k3-x1) at (3.0-\keyoffset, 0.0) {\ }; 
\node[xiblank] (k3-x2) at (3.5+\keyoffset, 0.0) {\ }; 

\draw[arrmod] (k3-x1) -- node[below,keyitem] { \raisebox{0.25ex}{\begin{tabular}{c} Feature \\[-0.3ex] Modification \end{tabular}} } (k3-x2);


\end{scope}

\end{tikzpicture}
\\
\end{centering}
%
%
\caption{
Visualization of 
the axioms of \cref{def:axioms}.
Solid arrows represent assumed choice probabilities, dashed arrows represent choice probabilities 
implied by an axiom, and in \emph{noninteractive compositionality}, dotted 
arrows represent 
feature modifications.
Arrows may be 
labeled with choice probabilities.
\todo{Reflexivity, or the principle $H(x, x) = \mathsmaller{\frac{1}{2}}$, is omitted from \cref{def:axioms}, as it 
follows from complementarity.} 
%
\todo{
These axioms are roughly arranged by complexity:\todo{ reflexivity is a single point condition,}
Complementarity describes the relationship between two points,
WT describes a triangular relationship between three points, 
and NC
is
a four-point condition, where the two assumed preferences imply the remaining preferences.
}
\todo{Conditional comp?}
}
\label{fig:axioms}
\end{figure}

\todo{Graphic illustration}
\todo{Domain reduction.}
These axioms are visualized in \cref{fig:axioms}.
\emph{Complementarity} (\ref{def:axioms:comp}) 
ensures
that the order in which options are presented should not matter, and
predicted probabilities
sum to $1$.\todo{footnote{If order effects \emph{are} desirable, or one wishes not to normally assume them to not exist, this assumption is not actually limiting:  one can simply augment the input space to represent left/right context explicitly. OR augment $\cal X$ with its order, i.e.,treat the comparison problem as $(x_{L}, L)$ vs. $(x_{R}, R)$ instead of just $x_{L}$ vs $x_{R}$.} Parity problems? transitivity is about 3s, and our arguments require flopping things around? So this would require illegal comparisons where R / L context is flipped or duplicated.}
\emph{Weak transitivity} (\ref{def:axioms:wt}) requires that comparisons between $(x_1, x_2)$ and $(x_2, x_3)$ carry all the information about the items $x_1, x_2, x_3$ to also compare $(x_1, x_3)$, i.e., the first two pairwise comparisons suffice to ``complete the triangle'' of all three pairwise comparisons.
%
%
\todo{Reversibility is a bit subtle, but it essentially requires that a path going from $x_{1}$ to $x_{2}$ to $x_{3}$ has a corresponding reverse path from $x_{3}$ to $x'_{2}$ to $x_{1}$, with corresponding jumps in relative probabilities.}%
\emph{Codomain span} (\ref{def:axioms:cs}) is a
technical condition, essentially describing a type of continuity, as to specify the decision rule's transitivity law for \emph{all possible probabilities}, we must ensure that all possible probabilities \emph{actually arise} in comparisons over the domain $\X$.
\Cref{thm:factoring} item~\ref{thm:factoring:trans} shows that these properties work together to induce a strong form of transitivity in probabilistic predictions, and the discrete form given as item~\ref{thm:factoring:disc} relaxes the \emph{codomain span} (\ref{def:axioms:cs}) assumption.
Axioms 
\ref{def:axioms:cic}~\&~\ref{def:axioms:ci} characterize the extent to which different features interact in the decision process.
%
\emph{Noninteractive compositionality} encodes a form of \textit{feature independence},
requiring that the
impact of changing two different features is compositional, 
which is less restrictive than assuming linearity, but restricts interactions between features in a similar manner.
Essentially, this axiom characterizes a \emph{generalized additive model} (GAM) \citep{hastie2017generalized}, as it dictates how information is synthesizes \emph{across features}.\todo{Citation!}
\emph{Conditional interactivity} 
then generalizes this idea, allowing for non-additive \emph{conditional feature interactions}.
We propose this axiom to account for a larger class of models that consider the decision context, where the impact of changing two features is additively compositional, \emph{except if} one of the features is a \emph{condition feature} (part of the \emph{context} $\omega$ described in \cref{sec:decision_rules}).

Axioms 1--3 are prescriptive, describing properties that may be considered desirable by many, whereas
4--5
are not 
assumed to hold universally (potentially varying even within an individual), but rather they suffice characterize a class of simple decision-making rules when they do hold.
This stands in contrast to the strong assumptions made by classical axiomatic models of decision making. 
Moreover, they are agnostic to the functional form of the decision rule and do not assume linear utility, stable preferences, or optimality.

%
{
We emphasize that our axioms are strictly weaker than those commonly used in classical choice theory. For example, \emph{WT} does not assume that preferences are globally transitive or derivable from a single latent utility function; it merely requires that a comparison between two options can be constructed from intermediate comparisons. Similarly, \emph{NC} is weaker than additive utility or independence of irrelevant alternatives: it constrains how feature modifications combine without assuming linearity or ruling out contextual dependence. \emph{CIC} further relaxes this requirement by explicitly allowing structured feature interactions. 
As a result, our framework accommodates heuristic and context-sensitive decision processes documented in empirical studies, while still enabling formal analysis and learnability guarantees. 
We now show that the decision-making processes that satisfy these axioms follow the two-stage process outlined in \cref{sec:decision_rules}.
%
%
}

%



\begin{restatable}[Axiomatic Factoring Characterization]{theorem}{thmfactoring}
\label{thm:factoring}
We now explore the consequences of
our axioms
on characterizing the response function $H(\cdot, \cdot)$ of a decision-maker.
\begin{enumerate}
\item\label{thm:factoring:disc} \textbf{Atomic Model}
Suppose 
axiom~\ref{def:axioms:comp}, and
also
that
\todo{We used to assume this: the transitivity law $f(\cdot, \cdot)$ is symmetric,
i.e., $H(x_{1}, x_{3}) = f( H(x_{1}, x_{2}), H(x_{2}, x_{3}) ) = f( H(x_{2}, x_{3}), H(x_{1}, x_{2}) )$ for all $x_{1}, x_{2}, x_{3} \in \X$.}%
$\X$ is a countable domain.
Then there exists an \emph{atomic rule} $\inner: \X \to \R$ and $\outer: \R \to [0, 1]$ s.t. $H(\cdot, \cdot)$ can be factored as\todo{Isn't $\inner$ a bivariate function?}
\[
H(x_{1}, x_{2}) = \outer \left( \inner(x_{1}) - \inner(x_{2}) \right) \enspace.
\]

\item\label{thm:factoring:trans} \textbf{$\sigma(\cdot)$-Transitivity}
Suppose\todo{ a decision rule $H(\cdot, \cdot): \X \times \X \to [0, 1]$ that obeys} axioms~\ref{def:axioms:comp}--\ref{def:axioms:cs} 
({Complementarity}, {WT}, and {Codomain Span}) hold.
Then there exists a symmetric continuous random variable with full support and CDF $\sigma(\cdot)\iflongversion: \R \to (0, 1)\fi$ s.t.
\[
H(x_{1}, x_{3}) = \sigma \left( \sigma^{-1}( H(x_{1}, x_{2} ) ) + \sigma^{-1}( H(x_{2}, x_{3} ) ) \right) \enspace. \todo{TODO axiom 1?}
\]
Moreover, there exists some \emph{atomic rule} $\inner: \X \to \R$ such that $H(\cdot, \cdot)$ can be factored as\todo{Isn't $\inner$ a bivariate function?}
\[
H(x_{1}, x_{2}) = \sigma \left( \inner(x_{1}) - \inner(x_{2}) \right) \enspace.
\]
\todo{
Moreover, there exists some ``neutral baseline'' $x_{0} \in \X$, such that for all $x \in \X$, it holds
$
H(x, x_{0})
=
\sigma \circ \inner(x)
$,
i.e., the operator $(\outer \circ \inner)(\cdot)$ ``compares'' a given $x$ to the neutral baseline.}%
\todo{Future draft: path reversibility is sufficient but not necessary for symmetry?}
\todo{Add uniqueness characterization:
Moreover, this representation is unique up to
linear (but not affine) transformation,
i.e., the set of all such $\sigma$ that satisfy the above is
\[
\Sigma = \{ \sigma'(u) = \sigma( \alpha u 
    ) \, | \, \alpha \in \R 
    \}
\enspace.
\]
Hence for some choice of $\alpha$ 
above, there exists a sigmoid curve $\sigma(u)$ such that 
$\frac{\partial}{\partial u} \sigma(u) = 1$ at $u = 0$, which we term the \emph{canonical form}\footnote{This multiplicative scale is often referred to as the \emph{inverse temperature}, by analogy with statistical physics models.
Some authors canonicalize to $\frac{\partial}{\partial u} \sigma(u) = \frac{1}{4}$ at $u = 0$, as per the logistic function, or for the Gaussian case $\Phi$, we have $\frac{\partial}{\partial u} \sigma(u) = \frac{1}{\sqrt{2\pi}}$.}.
Additionally, item~1 holds (non-uniquely) with $\outer(\cdot) = \sigma(\cdot)$.
}

\todo{
[why it's called transitivity: these imply that $H(x_{1}, x_{3}) \geq \max( H(x_{1}, x_{2}), H(x_{2}, x_{3}) )$ if $H(x_{1}, x_{2}) \geq \frac{1}{2}$ and $H(x_{1}, x_{3}) \geq \frac{1}{2}$.
Also $H(x_{1}, x_{3}) \geq \frac{1}{2}$ iff $\frac{1}{2}( H(x_{1}, x_{2}) + H(x_{2}, x_{3}) ) \geq \frac{1}{2}$, i.e., the ``stronger'' predicition wins.
If the two predictions are on opposite sides of $\frac{1}{2}$, then $\min(H(x_{1}, x_{2}), H(x_{2}, x_{3})) \leq H(x_{1}, x_{3}) \leq \max(H(x_{1}, x_{2}), H(x_{2}, x_{3}))$
]
}

\item\label{thm:factoring:uncond} \textbf{Unconditional Factor Model}:
Suppose
axioms~\ref{def:axioms:comp}--\ref{def:axioms:cs} 
and \ref{def:axioms:cic} (NC).
Then there exist \emph{inner functions} $\inner^{i}: \X_{i} \to \R$ for $i \in [d]$ 
such that the atomic rule $\inner(\cdot)$ and decision rule
$H(\cdot, \cdot)$
factor as
\[
\inner(x) = \smash{\sum_{i=1}^{d}} \inner^{i}(x^{(i)})
\enspace, \quad
H(x_1, x_2) = \outer\left( \vphantom{\sum}\smash{\sum_{i=1}^{d}} \inner^{i}(x^{(i)}_{1}) - \inner^{i}(x^{(i)}_{2}) \right)
\enspace.
\todo{TODO: sigma houter?}
\]

\item\label{thm:factoring:cond} \textbf{Conditional Factor Model}:
Suppose
axioms~\ref{def:axioms:comp}--\ref{def:axioms:cs}
and
\ref{def:axioms:ci} (CIC).
Then there exist \emph{conditional inner functions} $\smash{\inner^{i,\cdot}:} \X_{i} \to \R$ for $i \in [d]$ 
and \emph{condition features} (context) $\omega \subseteq [d]$, $\omega_{i} = \omega \setminus \{i\}$,
s.t. 
\[
H(x_1, x_2) = \outer\left( \vphantom{\sum}\smash{\sum_{i=1}^{d}} \inner^{i,x_1^{\omega_i}}(x^{(i)}_{1}) - \inner^{i,x_2^{\omega_i}}(x^{(i)}_{2}) \right)
\enspace. \todo{TODO: conditioning notation}
\]
%
\todo{Is there a version that specifies the conditioning?}
\end{enumerate}

\end{restatable}
%
%
%
Item~\ref{thm:factoring:disc} shows that 
axiom~1
suffices
to reduce binary choices to a function of a difference of atomic predictions for continuous transitivity laws $f(\cdot, \cdot)$, i.e., to $H(x_{1},x_{2}) = \outer(\inner(x_{1}) - \inner(x_{2}))$,\linebreak[3] thus recovering the two-stage modeling process described in \cref{sec:decision_rules}.
However, there is little structure beyond this: Such a factoring \emph{exists}, but it is by no means \emph{unique}, and it may well be \emph{intransitive}.
Item~\ref{thm:factoring:trans}
then imposes weak transitivity through axiom~\ref{def:axioms:wt}, additional \emph{continuity structure} through axiom~\ref{def:axioms:cs}, and paired with continuity of the transitivity law $f(\cdot, \cdot)$, we may conclude that $\outer(\cdot)$ is a sigmoid function (CDF)\todo{\footnote{Moreover, there exist domains that satisfy axioms 1--2 that do not admit a sigmoidal $\outer(\cdot)$, i.e., that are $\sigma$-intransitive.}}. 
Axioms 4--5 are then needed to control feature interactions, and dictate a two-stage model structure wherein each feature is processed individually or conditionally.

\todo{Longer version:
Naturally, one wonders as to the importance of the stronger axioms, and whether the conclusion of item~\ref{thm:factoring:trans} is really so much stronger than that of item~\ref{thm:factoring:disc}. 
It may be surprising that imposing some kind of continuity law, i.e., codomain span, over the domain $\X$ suffices to have such an impact, but
the following example brings its importance into contrast.
Consider a trivial three element domain that obeys an antimonotonic rocks-paper-scissors relationship, defined by $H(x_{1}, x_{2}) = H(x_{2}, x_{3}) = H(x_{3}, x_{1}) = p$ for any $p \neq \frac{1}{2}$. From here complementarity and weak transitivity require that $f(p, p) = p$, $f(1-p,1-p) = 1-p$, $f(p, 1-p) = \frac{1}{2}$, which completely specifies the decision rule while respecting these axioms.
Here we see that weak transitivity does not imply transitivity of weak preferability, i.e., the relation $H(x_{1}, x_{2}) \geq \frac{1}{2} \wedge H(x_{2}, x_{3}) \geq \frac{1}{2} \implies H(x_{3}, x_{1}) \geq \frac{1}{2}$, however it is easy to see that this property \emph{does follow} from $\sigma$-transitivity. 
Consequently, this intransitive preference ordering can not be expressed as a $\sigma$-transitive relationship. 
}

Intuitively, with $\sigma$-transitivity, we require that transitive probabilities can be computed via addition within the sigmoid.
Therefore, transitivity along chains of probabilities above $\smash{\frac{1}{2}}$ grow ever closer to $1$ as $\sigma(\cdot)$ is applied to a growing sum.
\iflongversion
TODO
Thus, if both predictions of probability at least $\smash{\frac{1}{2}}$, a prediction with probability $p \geq \smash{\frac{1}{2}}$
produces a positive value of $\sigma^{-1}(p)$; the certainty of transitive predictions grows when both values have the same sign. TODO english!
\fi
\iflongversion%
Several valid $\sigma$ sigmoid functions commonly employed in ML are contrasted in \cref{fig:sigmoid}.\todo{Discuss?}
\fi%
In the parlance of generalized linear models, $\sigma^{-1}$ plays the role of a link function.
Bradley-Terry models, such as logistic regressors, employ the logistic sigmoid\todo{discuss log odds}, but other sigmoids, such as the Gaussian CDF (with probit link function)
also see use,
e.g., in GLMs \citep{mccullagh1989generalized}.\todo{Discuss computational learnability, log-concavity, and cross entropy loss?}

\todo{Discuss sigmoids more?}

\iflongversion
\begin{figure}

\newcommand{\plotscale}{0.333\textwidth}
\tikzset{
  declare function={
    erfinv(\x)=\x/abs(\x) * sqrt( sqrt( (4.3307 + ln(1-\x^2)/2 )^2 - ln(1-\x^2)/0.147 ) - (4.3307 + ln(1-\x^2)/2);
    erf(\x)=%
      (1+(e^(-(\x*\x))*(-265.057+abs(\x)*(-135.065+abs(\x)%
      *(-59.646+(-6.84727-0.777889*abs(\x))*abs(\x)))))%
      /(3.05259+abs(\x))^5)*(\x>0?1:-1);
    gcdf(\x)={(1+erf(\x/sqrt(2)))/2};
    gcdfinv(\x)={sqrt(2) * erfinv(2 * \x - 1)};
    algsig(\x)={0.5 + 0.5*\x / sqrt(1+\x^2)};
    algsiginv(\x)={sign(\x-0.5)*0.5*sqrt( -(4 * \x * (1 - \x) - 1) / (\x * (1 - \x)) )};
    sttwo(\x)={algsig(sqrt(2)*\x)};
    sttwoinv(\x)={algsiginv(\x)/sqrt(2)};
    algsig2(\x)={(0.5 + 0.5*\x / (1+abs(\x)))};
    algsig2inv(\x)={ 1 / (-sign(\x-0.5) + 1 / (2 * \x - 1)) };
    gd(\x)={(2 / pi) * rad(atan(exp(pi * 0.5 * \x)))};
    gdinv(\x)={(2 / pi) * ln(tan(pi * 0.5 * \x r))};
    laplace(\x)={0.5 * (1 + sign(\x) * (1 - exp(-abs(\x)) ) ) };
    laplaceinv(\x)={sign(0.5-\x)*ln(1-2*abs(\x-0.5))};
    },
    curve/.style={semithick,line cap=round},
    rangecurve/.style={thin,draw opacity=0.5},
    fillcurve/.style={fill opacity=0.25},
    bgline/.style={dashed,very thin,draw=gray},
  }%
\pgfplotsset{
    myaxis/.style={
        width=\plotscale,
        height=\plotscale,
        xmin=-8,xmax=8,
        ymin=0,ymax=1,
        domain=-8:8,
        samples=64,smooth,
        no markers,
        legend pos={outer north east},
        legend style={font=\footnotesize},
        x tick label style={font=\scriptsize},
        y tick label style={font=\scriptsize},
        xlabel={$u\vphantom{pH}$},
        ylabel={$\sigma(u)$},
        x label style={font=\scriptsize,at={(axis description cs:0.5,0.05)}},
        y label style={font=\scriptsize,at={(axis description cs:0.1,0.5)}},
    },
  }
\null\hspace{-0.2cm}
\begin{tikzpicture}[
    ]
\begin{axis}[
    myaxis,
    xmin=0,xmax=1,
    ymin=0,ymax=1,
    domain=0.001:0.999,
    samples=32,smooth,
    legend pos={north west},
    xlabel={$H(x_{1}, x_{2}) = H(x_{2}, x_{3}) = p$},
    ylabel={$H(x_{1}, x_{3}) = \sigma(2\sigma^{-1}(p))$},
]


\draw[bgline] ({axis cs:0.5,0}|-{rel axis cs:0,0}) -- ({axis cs:0.5,1}|-{rel axis cs:0,1});

\addplot[bgline,samples=2,forget plot=true] {0.5};

\addplot+[curve] { 1 / (1 + (1 / x - 1)^2) };
\addlegendentry{$\logistic(2\logit(p))$ (Logistic)};

\addplot[rangecurve,name path=gcdf,forget plot=true,draw=purple] { gcdf(2 * gcdfinv(x)) };

\addplot[rangecurve,name path=cauchy,forget plot=true,draw=purple] { 0.5 + rad(atan(2*tan((x - 0.5) * pi r))) / pi };

\addplot[fillcurve,fill=purple] fill between [of=gcdf and cauchy];

\addplot+[curve] { laplace(2*laplaceinv(x)) };
\addlegendentry{
Laplace};


\addplot+[curve,domain=0.01:0.99] { gd(2 * gdinv(x)) };
\addlegendentry{GD};

\addplot+[curve,dotted] { 1-exp(-(ln(1-x))^2) };
\addlegendentry{ $
1-\exp(-\exp(u))$ 
};

\addplot+[curve,dotted] { x^2 };
\addlegendentry{ $\exp(2\ln(x)) = x^{2}$ (Exponential
)};




\legend{}; 

\end{axis}
\end{tikzpicture}
\hspace{-0.2cm}
%
\begin{tikzpicture}[
    ]
\begin{axis}[
    myaxis,
    width=\plotscale,
    height=\plotscale,
    xmin=-8,xmax=8,
    ymin=0,ymax=1,
    domain=-8:8,
    samples=64,smooth,
  ]

\addplot[bgline,forget plot=true] {0.5};

\draw[bgline] ({axis cs:0,0}|-{rel axis cs:0,0}) -- ({axis cs:0,0}|-{rel axis cs:0,1});

\addplot+[curve] { 1 / (1 + exp(-x)) };
\addlegendentry{$\logistic(u)$};

\addplot[rangecurve,draw=purple,forget plot=true,name path=gcdf] { gcdf(x) };
\addplot[rangecurve,draw=purple,forget plot=true,name path=cauchy] { 0.5 + rad(atan(x)) / pi };

\addplot[fillcurve,fill=purple] fill between [of=gcdf and cauchy];
\addlegendentry{Student's $t$}

\addplot+[curve] { laplace(x) };
\addlegendentry{
Laplace};

\addplot+[curve] { gd(x) };
\addlegendentry{\tiny Hyperbolic Secant
};

\addplot+[curve,dotted,domain=-8:8] { 1-exp(-exp(x)) };
\addlegendentry{C Log-Log 
};

\addplot+[curve,dotted,domain=-8:0] { exp(x) };
\addlegendentry{ $
\exp(u)$ 
};



\end{axis}
\end{tikzpicture}
%
%
\caption{\small
Transitivity plots for various sigmoid curves, assuming $H(x_{1}, x_{2}) = H(x_{2}, x_{3}) = p$ (left)
and the
sigmoid curves (right).
We plot sigmoids corresponding to the CDFs of the logistic, Student's $t$ family (%
from Cauchy $\nu=1$ to Gaussian $\nu=\infty$%
),
hyperbolic secant, and Laplace distributions.
In addition to these 
valid symmetric sigmoid functions 
(solid), we also plot two invalid functions 
(dotted): 
The complementary log-log CDF 
(asymmetric), and 
exponential function,
(not
a sigmoid). 
Both appear superficially to fit our framework, and could be used to construct
$\outer(\cdot)$, but due to their asymmetry, complementarity must be violated.
\todo{Student's t integral, CDF of symmeric distribution (i.e., $-X$ has same dist as $X$), algebraic family, inverse box-cox.
NB: Atan is standard Cauchy
NB: due to inverses and stuff, algsig vs student $t$ $\nu=2$ makes no difference!
NB: Gudermanian is hyperbolic secant dist: https://en.wikipedia.org/wiki/Hyperbolic_secant_distribution
NB: Algebraic family converges to $\frac{1}{2}$ for $p \to 0^{+}$ and $\clamp(0,0.5(x+1),1)$ for $p \to \infty$.
}
%
}
%
\label{fig:sigmoid}
    \vspace{-0.75cm}\null
\end{figure}
\fi

We now describe \emph{strong assumptions} 
that may only be appropriate in specific real-world domains, 
but nevertheless produce specific properties for the hypothesis class of editing function $\HC_{\text{inner}}$. 
Unlike our axioms of \cref{def:axioms}, these assumptions are highly situational, and they describe behavior that may be considered reasonable in some circumstances, but their absence should rarely be viewed as surprising or undesirable in and of itself.

\begin{definition}[Assumptions on Binary Preference Models]
\todo{x1, x2 notation!}
\label{def:assumptions}

We state assumptions on $H$
that are suitable for discrete or continuous domains.
These discrete properties must apply to all $x_{1}, x_{2}
\in \X$. 
\begin{enumerate}[resume]
\item\label{def:assumptions:d-lin} \textbf{$\sigma$-Linearity}: 
There exists some $\wv \in \R^{d}$ such that\todo{forall is here}
\todo{Old form: $\sigma^{-1} \circ H(x_{1}, x_{2}) - \sigma^{-1} \circ H(x_{1}', x_{2}') = \wv \cdot \bigl( (x_{1} - x_{1}') - (x_{2} - x_{2}') \bigr)$.}%
$\sigma^{-1} \circ H(x_{1}, x_{2}) = \wv \cdot (x_{1} - x_{2} )$.

\todo{Simplified:
\[
\sigma^{-1} \circ H(x_{1}, x_{2}) = \wv \cdot (x_{1} - x_{2} )
\]
or even
\[
\sigma^{-1} \circ H(x_{1}, x_{2}) - \sigma^{-1} \circ H(x_{1}, x_{2}') = \wv \cdot (x_{2}' - x_{2})
\]
}
\item\label{def:assumptions:d-mono} \textbf{Monotonicity}:
\todo{Old: If $x_{1} \preceq x_{1}'$ and $x_{2} \succeq x_{2}'$, then $H(x_{1}, x_{2}) \leq H(x'_{1}, x'_{2})$.}%
If $x_{1} \preceq x_{2}$, then $
H(x_{1}, x_{2}) \leq \frac{1}{2}$.

\todo{Simplified:
If $x_{1} \preceq x_{2}$, then
\[
H(x_{1}, x_{2}) \leq \frac{1}{2}
\enspace.
\]
or
if $x_{2} \preceq x'_{2}$, then
\[
H(x_{1}, x_{2}) \leq H(x_{1}, x'_{2})
\enspace.
\]
or
\[
\sigma^{-1} \circ H(x_{1}, x_{2}) \leq 0
\]
or even
\[
\sigma^{-1} \circ H(x_{1}, x_{2}) - \sigma^{-1} \circ H(x_{1}, x_{2}') = \wv \cdot (x_{2}' - x_{2})
\]
}
\end{enumerate}
\end{definition}

  \vspace{-0.5ex}
  We now combine these stronger assumptions with the results of the axiomatic analysis of \cref{thm:factoring}.
Recall that 
our basic axioms imply the factoring
\[
\HC = \left\{ x_{1}, x_{2} \mapsto 
  \sigma\!\left(  \inner(x_{1}) - \inner(x_{2}) \right)
  \,\middle|\, \inner \in \HCinner \right\}
  \enspace.
 \]

\begin{restatable}[Axiomatic Models]{theorem}{thmmodels}
\label{thm:models}
%
Suppose axioms \ref{def:axioms:comp}--\ref{def:axioms:cic}.
\iflongversion
Then the class of feasible decision rules can be factored as
\vphantom{-0.333cm}
{\small%
\[
\HC = \left\{ x_{1}, x_{2} \mapsto
  \sigma\left ( \vphantom{\sum}\smash{\sum_{i=1}^{d}} \inner^{i}(x^{(i)}_{1}) - \inner^{i}(x^{(i)}_{2}) \right)
  \,\middle|\, \inner^{i} \in \HCinner^{(i)} \right\}
\enspace.
\]
}
%
\todo{Suppose also that [canonicalization].
Then
this choice of $\outer$ is required.
}%

The following additional assumptions further restrict each $\HCinner^{(i)}$.

TODO edit me
\else
The following 
restrict $\HCinner$ or each $\HCinner^{(i)}$.
\fi
\begin{enumerate}
\item\label{thm:models:lin} \scalebox{0.96}[0.99]{Suppose \emph{$\sigma$-linearity}\iflongversion\ (\cref{def:assumptions}~items~\ref{def:assumptions:d-lin}~or\ref{def:assumptions:c-lin})\fi.
Then $\HCinner^{(i)} = \{ x \mapsto w x | w \in \R \}$, 
thus $\HC = \{ x_{1}, x_{2} \mapsto \sigma(\bm{w} \cdot (x_{1} - x_{2}) ) \,|\, \bm{w} \in \R^{d} \}
$.\!}

\item\label{thm:models:umono} Suppose \emph{monotonicity}\iflongversion\ (\cref{def:assumptions}~item~\ref{def:assumptions:d-mono})\fi.
Then $\HCinner^{(i)} = \{ h: \R \to \R \,|\, x \leq y \implies h(x) \leq h(y) \}$.


\item\label{thm:models:mmono} \textbf{Multivariate Monotonic Models}: If we assume monotonicity but relax 
noninteractive compositionality, then
$\HC = \{ x_{1}, x_{2} \mapsto \sigma( h(x_{1}) - h(x_{2}) ) \,|\, h: \X \to \R \text{ s.t. } \vec{x} \preceq \vec{y} \implies h(\vec{x}) \leq h(\vec{y}) \}$.

\todo{continuous matters here.}
\todo{relax transitivity, can we get a symmetric monotonic model? Does there exist some sigma such that it's the above (without assuming it)?}

\item\label{thm:models:condition-tree}
\textbf{Conditional GAM Tree}
If we relax 
NC to CIC, then $\inner(\cdot)$ may be represented as a hybrid model of a \emph{decision tree} / GAM model, starting with a tree over $\X_{\omega}$, where each leaf contains a \emph{GAM} over $\X_{\setminus \omega}$.\todo{notation TODO $x^{(i)}$? Discuss below?} Formally, we have
\smash{$
\HCinner = \left\{ \vphantom{\sum}\smash{\sum_{i=1}^{d-\abs{\omega}}} \bigl(T_{i}(x^{\omega}) \bigr)\bigl((x^{\setminus \omega})^{(i)}\bigr) \, \middle| \, T \in \R^{\abs{\omega}} \to (\R \to \R)^{d-\abs{\omega}} \right\}
.$}
%

\end{enumerate}

%
\end{restatable}

\iflongversion
TODO full result

\begin{definition}[Assumptions on Binary Preference Models]
\todo{x1, x2 notation!}
\label{def:assumptions}

We first state assumptions on $H$
that are suitable for discrete domains.
These discrete properties must apply to all $x_{1}, x_{2}, x'_{1}, x'_{2} \in \X$. 
\begin{enumerate}[resume]
\item\label{def:assumptions:d-lin} \textbf{Discrete $\sigma$-Linearity}: 
There exists some $\wv \in \R^{d}$ such that
$\sigma^{-1} \circ H(x_{1}, x_{2}) - \sigma^{-1} \circ H(x_{1}, x_{2}) = \wv \cdot (x_{1} - x_{2})$.

TODO $\sigma^{-1} \circ H(x_{1}, x_{2}) - \sigma^{-1} \circ H(x_{1}', x_{2}') = \wv \cdot \bigl( (x_{1} - x_{1}') - (x_{2} - x_{2}') \bigr)$.

\item\label{def:assumptions:d-mono} \textbf{Discrete Monotonicity}:
If $x_{1} \preceq x_{1}'$ and $x_{2} \succeq x_{2}'$, then $H(x_{1}, x_{2}) \leq H(x'_{1}, x'_{2})$.

\end{enumerate}
We now state similar assumptions that are suitable for continuous domains.
These assumptions require differentiation, but may be stated in terms of only a single pair $x_{1}, x_{2} \in \X$.
Supposing $\X$ is a compact subset of $\R^{d}$, the following must hold for any $x_{1}, x_{2} \in \interior(\X)$.
\begin{enumerate}[resume]

\item\label{def:assumptions:c-lin} \textbf{Continuous $\sigma$-Linearity}: 
There exists some $\wv \in \R^{d}$ such that for all $x_{1}, x_{2} \in \interior(\X)$,
$\grad_{x_{1}} H(x_{1}, x_{2}) = \sigma'( \wv (x_{1} - x_{2}) ) \wv$ and $\grad_{x_{2}} H(x_{1}, x_{2}) = -\sigma'( \wv (x_{1} - x_{2}) ) \wv$.

\item\label{def:assumptions:c-mono} \textbf{Continuous Monotonicity}:
$\grad_{x_{1}} H(x_{1}, x_{2}) \succeq \bm{0}$ and $\grad_{x_{2}} H(x_{1}, x_{2}) \preceq \bm{0}$.

\end{enumerate}
\end{definition}

\begin{restatable}[Axiomatic Models]{theorem}{thm:models}
\label{thm:models}

Suppose as in \cref{thm:factoring}~item~\ref{thm:factoring:uncond}.
Then the class of feasible decision rules can be factored as
{\small%
\[
\HC = \left\{ x_{1}, x_{2} \mapsto
  \outer\left ( \sum_{i=1}^{d} \inner^{i}(x^{(i)}_{1}) - \inner^{i}(x^{(i)}_{2}) \right)
  \,\middle|\, \inner^{i} \in \HCinner^{(i)} \right\}
\enspace.
\]
}
%
\todo{Suppose also that [canonicalization].
Then
this choice of $\outer$ is required.
}%

The following additional assumptions further restrict each $\HCinner^{(i)}$.
\begin{enumerate}
\item Suppose \emph{continuous or discrete $\sigma$-linearity}\iflongversion\ (\cref{def:assumptions}~items~\ref{def:assumptions:d-lin}~or\ref{def:assumptions:c-lin})\fi.
Then $\HCinner^{(i)} = \{ x \mapsto w x | w \in \R \}$, 
thus $\HC = \{ x_{1}, x_{2} \mapsto \sigma(\bm{w} \cdot (x^{(1)} - x^{(2)}) ) \,|\, \bm{w} \in \R^{d} \}
$.

\item Suppose \emph{discrete monotonicity}\iflongversion\ (\cref{def:assumptions}~item~\ref{def:assumptions:d-mono})\fi.
Then $\HCinner^{(i)} = \{ h: \R \to \R \,|\, x \leq y \implies h(x) \leq h(y) \}$.

\item \scalebox{0.98}[0.99]{Suppose \emph{continuous monotonicity}\iflongversion\ (\cref{def:assumptions}~item~\ref{def:assumptions:c-mono})\fi.
Then $\HCinner^{(i)} = \{ h: \R \to \R \,|\, x \leq y \implies h(x) \leq h(y), h \text{ is continuous} \}$.}

\item \textbf{Multivariate Monotonic Models}: If we relax 
noninteractive compositionality, then
$\HC = \{ x_{1}, x_{2} \mapsto \sigma( h(x_{1}) - h(x_{2}) ) \,|\, h: \X \to \R \text{ s.t. } \vec{x} \preceq \vec{y} \implies h(\vec{x}) \leq h(\vec{y}) \}$.

\todo{continuous matters here.}
\todo{relax transitivity, can we get a symmetric monotonic model? Does there exist some sigma such that it's the above (without assuming it)?}

\end{enumerate}

\end{restatable}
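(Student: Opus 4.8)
The plan is to treat the opening display as essentially a bookkeeping restatement of \cref{thm:factoring}~item~\ref{thm:factoring:uncond}, and to put the real work into items~\ref{thm:models:lin}--\ref{thm:models:mmono}. Since the hypotheses are exactly those of \cref{thm:factoring}~item~\ref{thm:factoring:uncond}, every feasible $H$ already admits the representation $H(x_1,x_2) = \sigma\bigl(\sum_{i=1}^d \inner^i(x_1^{(i)}) - \inner^i(x_2^{(i)})\bigr)$ with $\sigma$ the fixed symmetric continuous CDF with full support from \cref{thm:factoring}~item~\ref{thm:factoring:trans}; this is precisely the ``$\subseteq$'' content of the displayed identity, with $\HC_i$ the collection of coordinate functions $\X_i \to \R$ arising this way. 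For the converse inclusion I would verify that any $H$ of this form respects the axioms: complementarity from $\sigma(-u) = 1 - \sigma(u)$; $\sigma(\cdot)$-transitivity (hence WT) because $\sigma^{-1}\circ H$ telescopes additively across the middle alternative; and NC because changing feature $a$ and changing feature $b$ affect disjoint summands, so their effects on $\sigma^{-1}\circ H$ simply add. The only genuine constraint is \emph{codomain span}, which forces the range of $x \mapsto \sum_i \inner^i(x^{(i)})$ over $\X$ to be rich enough to realize every $\sigma^{-1}(p)$; this is what $\HC_i$ implicitly encodes, and is the one point where a bounded domain $\X$ could cause trouble.

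For item~\ref{thm:models:lin}, I would combine the factored form with \cref{def:assumptions}~item~\ref{def:assumptions:d-lin}. Applying $\sigma^{-1}$ (well-defined since $\sigma$ is a strictly increasing CDF with full support) gives $\sum_{i=1}^d \bigl(\inner^i(x_1^{(i)}) - \inner^i(x_2^{(i)})\bigr) = \wv\cdot(x_1 - x_2)$ for all $x_1, x_2 \in \X$. Because $\X = \X_1 \times \cdots \times \X_d$, I can take $x_1, x_2$ agreeing in every coordinate except the $i$-th, which isolates $\inner^i(a) - \inner^i(b) = w_i(a - b)$ for all $a, b \in \X_i$; fixing a base point shows $\inner^i$ is affine with slope $w_i$, and the additive constant is immaterial since only $\inner^i(x_1^{(i)}) - \inner^i(x_2^{(i)})$ enters $H$. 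Hence $\HC_i = \{\, x \mapsto wx \mid w \in \R \,\}$, and reassembling the sum gives $\HC = \{\, x_1, x_2 \mapsto \sigma(\wv\cdot(x_1-x_2)) \mid \wv \in \R^d \,\}$.

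For item~\ref{thm:models:umono} I would argue the two inclusions at the level of a single $\HC_i$. If every $\inner^i$ is nondecreasing, then $x_1 \preceq x_2$ forces $\sum_i(\inner^i(x_1^{(i)}) - \inner^i(x_2^{(i)})) \le 0$, so $H(x_1,x_2) \le \sigma(0) = \frac{1}{2}$ by monotonicity of $\sigma$, giving \cref{def:assumptions}~item~\ref{def:assumptions:d-mono}. Conversely, for $a \le b$ in $\X_i$ take two alternatives equal in all coordinates but the $i$-th, where they carry $a$ and $b$; then $x_1 \preceq x_2$, so monotonicity gives $\sigma(\inner^i(a) - \inner^i(b)) \le \frac{1}{2} = \sigma(0)$, i.e., $\inner^i(a) \le \inner^i(b)$, so $\inner^i$ is nondecreasing (note that this cannot be evaded by letting one coordinate function compensate for another, since the other summands cancel). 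Item~\ref{thm:models:mmono} is the same computation with the additive decomposition removed: dropping NC but retaining axioms~1--3 still yields, via \cref{thm:factoring}~item~\ref{thm:factoring:trans}, $H(x_1,x_2) = \sigma(\inner(x_1) - \inner(x_2))$ for a single atomic rule $\inner:\X\to\R$ (essentially unique, as $\inner(x) = \sigma^{-1}(H(x,x_0))$ for a base point $x_0$), and monotonicity of $H$ is then equivalent, again via monotonicity and invertibility of $\sigma$, to $\vec{x} \preceq \vec{y} \implies \inner(\vec{x}) \le \inner(\vec{y})$.

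The argument is thus almost entirely a corollary of \cref{thm:factoring} together with the product structure of $\X$. I expect the only delicate points to be the ``$\supseteq$'' directions --- confirming that members of the restricted classes really are feasible, in particular that they do not silently break codomain span on a bounded $\X$ --- and the affine-versus-linear normalization in item~\ref{thm:models:lin}, where the per-feature additive constants must be explicitly discarded. Stating the single-coordinate perturbation cleanly, so that the perturbed alternatives genuinely lie in $\X = \X_1 \times \cdots \times \X_d$, is the one place where the product assumption does real work.
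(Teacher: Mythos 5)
Your proposal is correct and follows essentially the same route as the paper's proof: start from the factored form guaranteed by Theorem 2.5 item 3, apply $\sigma^{-1}$ to read off linearity of each $\inner^{i}$, and use single-coordinate perturbations (exploiting $\X = \X_1 \times \cdots \times \X_d$) to transfer monotonicity of $H$ to monotonicity of each factor, with the multivariate case handled directly at the level of the atomic rule. The main difference is one of care rather than substance --- you make explicit several steps the paper's sketch elides (the converse inclusion and its interaction with codomain span, the isolation of $\inner^{i}(a)-\inner^{i}(b)=w_i(a-b)$, and the discarding of the affine constants), which strengthens rather than departs from the argument.
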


\fi

\todo{[reorder item 4.]

[move text out of 2.6. xxx show how yyy restrict zzz]

[hybrid tree-gan model, where [tree] then [gam]]

[is gam clear? Suppose [axiom names]]
}

\todo{Aside on $\sigma = \sin^{2}$.}

From this perspective, we derive a few valuable insights:
\emph{Logistic regression} with nonnegative weights is abstractly characterized by Bradley-Terry aggregation ($\sigma(u) = \logistic(u)$) and linearity (and implicitly implies noninteractivity and monotonicity).
If linearity is relaxed to monotonicity, we obtain the class of univariate monotonic models, and subsequently when noninteractivity is relaxed, we obtain univariate monotonic models with conditional interactions\todo{Or multivariate}.
Similarly, \emph{probit regression} is characterized by taking $\sigma(u) = \Phi(u)$, i.e., the Gaussian CDF function, and linearity.
\iflongversion
TODO
NB the complementarity axiom constrains $\sigma$ to require $\sigma(u) = 1 - \sigma(-u)$, and as a result, popular asymmetric GLM, such as the complementary log-log, i.e., $\sigma(u) = 1 - \exp(-\exp(u))$, are inadmissible.
\fi

\iflongversion
TODO: don't say we use Luce!

Plan:

[probabilistic model first

Uniqueness of DFD: up to sign and translation: Yes among symmetric components, yes for Gaussian (uniquely among infinitely divisible), but no in general.

TODO: can there be an asymmetric decomp but no symmetric decomp?

For instance, the Laplace distribution may be represented either
as the di§erence of two symmetric Bessel distributions or as the di§erence of two
exponential distributions 

]

[then Luce?]
\fi
\paragraph{Extending Pairwise Comparisons to $n$-Way Choice}
We focus on binary preference elicitation because it is a well-studied task in the decision-making literature. 
To extend our binary preference framework to learn probabilistic rankings over $n$ items, we use the \emph{choice axiom} of \citet{luce1959individual}:
\textit{Introducing additional items should not change the probability ratio of choosing $x_{1}$ to choosing $x_{2}$,}
%
Without 
WT,
there are $\binom{n}{2}$ degrees of freedom (DoF) to complementary binary preferences, and a probabilistic ranking that accords with these binary preferences in general does not exist (e.g., preferences over rocks-paper-scissors cannot accord with 
the choice axiom).
However, 
with 
WT, there are only $n-1$ DoF, as a 
tree of predictions spanning 
$n$ items can be completed via transitivity\todo{ and complementarity}. \todo{e.g., the chain $\Prob(x_{1} > x_{2}), \Prob(x_{2} > x_{3}), \dots, \Prob(x_{n-1} > x_{n})$, or the star $\Prob(x_{1} > x_{2}), \Prob(x_{1} > x_{3}), \dots, \Prob(x_{1} > x_{n})$, suffice to compute the remaining $\binom{n}{2}-(n-1)$ prediction probabilities.}

\iflongversion
Thus, while at first glance, $n$-way comparisons would appear to require quadratic time (comparing all pairs), they are actually computable in linear time (assessing each item individually, then making $n-1$ comparisons).
We conjecture that while humans can easily handle binary comparisons (constant complexity), they become less accurate as the computation becomes more complex with additional items, and cognitive biases arise as a result.
However, this approach still motivates $\sigma$-transitivity, as the complexity of choosing among $n$ items is still greatly reduced over the general case, making it a feasible candidate for effective and efficient human decision making. 
Moreover, while computing the preference distribution requires \emph{linear memory}, selecting the \emph{most likely} option in the preference distribution requires only \emph{constant memory} (by keeping track of both the current probability in the chain, and the highest denormalized probability thus observed).
\todo{Does this imply computational stuff, like for convex optimization?}
\fi

\begin{restatable}[Proportionality of 
Choice]{theorem}{thmprop}
\label{thm:prop}
Suppose a probabilistic decision rule over $n \geq 3$ items 
that obeys 
the \emph{choice axiom}.
\iflongversion
TODO
\[
\forall i, j \in 1, \dots, n: \quad
\frac{\Prob(X = x_{i})}{\Prob(X = x_{j})} = 
\frac{H(x_{i}, x_{j})}{H(x_{j}, x_{i})}
\enspace.
\]
\fi%
Then
$H(\cdot, \cdot)$
admits a $\sigma$-transitive factoring with logistic $
\sigma(\cdot)$. 
\iflongversion
In other words, the choice axiom 
requires a Bradley-Terry decision model.
\fi
\end{restatable}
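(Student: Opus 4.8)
The plan is to extract from the choice axiom a Luce-style \emph{worth function} $v(\cdot)$ on the $n$ items, reparametrize it logarithmically to obtain the atomic rule $\inner(\cdot)$ together with $\outer(\cdot) = \logistic(\cdot)$, and then read off $\sigma$-transitivity with logistic $\sigma$ by telescoping log-odds. First I would fix the menu $U = \{x_{1},\dots,x_{n}\}$ and set $v(x_{i}) \doteq \Prob(X = x_{i})$, the choice probability of $x_{i}$ when all $n$ items are present. Applying the choice axiom to each two-element sub-menu $\{x_{i},x_{j}\} \subseteq U$ yields the proportionality identity $v(x_{i})/v(x_{j}) = H(x_{i},x_{j})/H(x_{j},x_{i})$ for every $i,j$. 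Since the two choice probabilities on the menu $\{x_{i},x_{j}\}$ sum to one, complementarity $H(x_{j},x_{i}) = 1 - H(x_{i},x_{j})$ holds, and solving the identity for $H(x_{i},x_{j})$ gives the Bradley--Terry form $H(x_{i},x_{j}) = v(x_{i})/\bigl(v(x_{i}) + v(x_{j})\bigr)$.

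Next, defining the atomic rule $\inner(x_{i}) \doteq \ln v(x_{i})$ (the choice of additive normalization of $v$ being irrelevant, as only differences of $\inner$ appear), we obtain $H(x_{i},x_{j}) = e^{\inner(x_{i})}/\bigl(e^{\inner(x_{i})} + e^{\inner(x_{j})}\bigr) = \logistic(\inner(x_{i}) - \inner(x_{j}))$, i.e., precisely the factoring of \cref{thm:factoring}~item~\ref{thm:factoring:disc} with $\outer = \logistic$. Because $\logit = \logistic^{-1}$ satisfies $\logit H(x_{i},x_{j}) = \inner(x_{i}) - \inner(x_{j})$, the log-odds telescope: $\logit H(x_{1},x_{2}) + \logit H(x_{2},x_{3}) = \inner(x_{1}) - \inner(x_{3}) = \logit H(x_{1},x_{3})$, whence $H(x_{1},x_{3}) = \logistic\bigl(\logit H(x_{1},x_{2}) + \logit H(x_{2},x_{3})\bigr)$. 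This is exactly the $\sigma$-transitive factoring of \cref{thm:factoring}~item~\ref{thm:factoring:trans} with logistic $\sigma$, completing the argument.

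The step I expect to be the main obstacle is the degenerate boundary behavior: if some $v(x_{i}) = 0$, equivalently some pairwise probability equals $0$ or $1$, the log-odds diverge and the logistic $\sigma$ (with codomain $(0,1)$) cannot literally represent the rule; I would handle this by restricting to the support $\{x_{i} : v(x_{i}) > 0\}$ and invoking the standard positivity clause accompanying Luce's axiom, or by interpreting the factoring in the extended-real limit. Two further points deserve care. First, $n \ge 3$ is essential: for $n = 2$ the choice axiom is vacuous and links no binary comparison to a common scale, so the multiplicative structure genuinely requires a third alternative in each triple. Second, defining $v$ on the \emph{full} menu $U$ (rather than per pair) is what guarantees a \emph{single} global $\inner$ consistent across all $\binom{n}{2}$ comparisons simultaneously. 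Alternatively, one could cite Luce's representation theorem as a black box for the construction of $v(\cdot)$ and then proceed directly with the logarithmic reparametrization above.
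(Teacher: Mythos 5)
Your proposal is correct, but it takes a genuinely different route from the paper's. You construct the representation from scratch: you read off a Luce worth function $v(x_i) = \Prob(X = x_i)$ from the full menu, combine the proportionality identity with complementarity to obtain the Bradley--Terry form $H(x_i,x_j) = v(x_i)/(v(x_i)+v(x_j))$, and then set $\inner = \ln v$ so that the logistic factoring and the telescoping of log-odds are immediate. The paper instead \emph{presupposes} that $H$ already admits some $\sigma$-transitive factoring (from the earlier axioms), writes $\Prob(C>A)$ as $\sigma(\sigma^{-1}(\Prob(C>B)) + \sigma^{-1}(\Prob(B>A)))$, substitutes the proportionality axiom into the telescoping product of "best"-probabilities, and grinds through the algebra to show that $\sigma(\sigma^{-1}(u)+\sigma^{-1}(v)) = \logistic(\logit(u)+\logit(v))$, i.e., that the transitivity law must be the Bradley--Terry one. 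The trade-off: your argument is more self-contained and directly constructive (it does not need a pre-existing factoring, and it exhibits $\inner$ explicitly as log-worth), while the paper's argument is pitched as a uniqueness statement — among all $\sigma$-transitive laws, only the logistic one is compatible with the choice axiom. Your attention to the boundary cases ($v(x_i)=0$, the role of $n\ge 3$, and the need for a single global worth function across menus) is careful and goes beyond what the paper's own proof addresses.
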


\todo{Discuss noise and cognitive biases. This is more reasonable when probabilities aren't too extreme.}%
\todo{New theorem: noisy decision making.}%
%
\Cref{thm:prop}
uniquely characterizes the Bradley-Terry model, 
but is only interesting insofar as the model is believable.
Alternative models of $n$-way decision making are also well-studied, such as noisy observation, where utility + noise is observed for each option, and the largest observation wins, resulting in a probability distribution over outcomes.
In particular, the Bradley-Terry model arises (non-uniquely) from homoskedastic Gumbel noise, but of course the Thurstone–Mosteller model \citep{thurstone1927law,mosteller1951remarks} arises from homoskedastic Gaussian noise (hence the probit link function), and from this perspective, $\sigma$-transitivity can be generalized to $n$-way decision models whenever the distribution with CDF $\sigma$
decomposes as
the difference of two
i.i.d.\ random variables, i.e., is in the \emph{difference-form decomposable} (DFD) family, although such decompositions \emph{are not 
always unique} \citep{carnal1989decomposition,ewerhart2025possibility}.
\todo{Not always the case?}

Overall, with reasonable domain-specific assumptions, we recover other standard modeling classes from our two-stage modeling framework, highlighting the expressive power of our modeling class.
Importantly, assumptions like linearity and monotonicity are domain-dependent and need to be qualitatively justified for the problem context.
Classes, like logistic regressors, that implicitly encode these assumptions may not generate cognitively-faithful models when the human decision process violates these assumptions.
%
Instead, we based our modeling class on generic axioms of human decision processes, which can be further constrained by domain-specific assumptions as necessary.
\section{Empirical Analysis on Kidney Allocation Data} \label{sec:experiments}

\begin{figure}
    \centering
    \includegraphics[width=\linewidth]{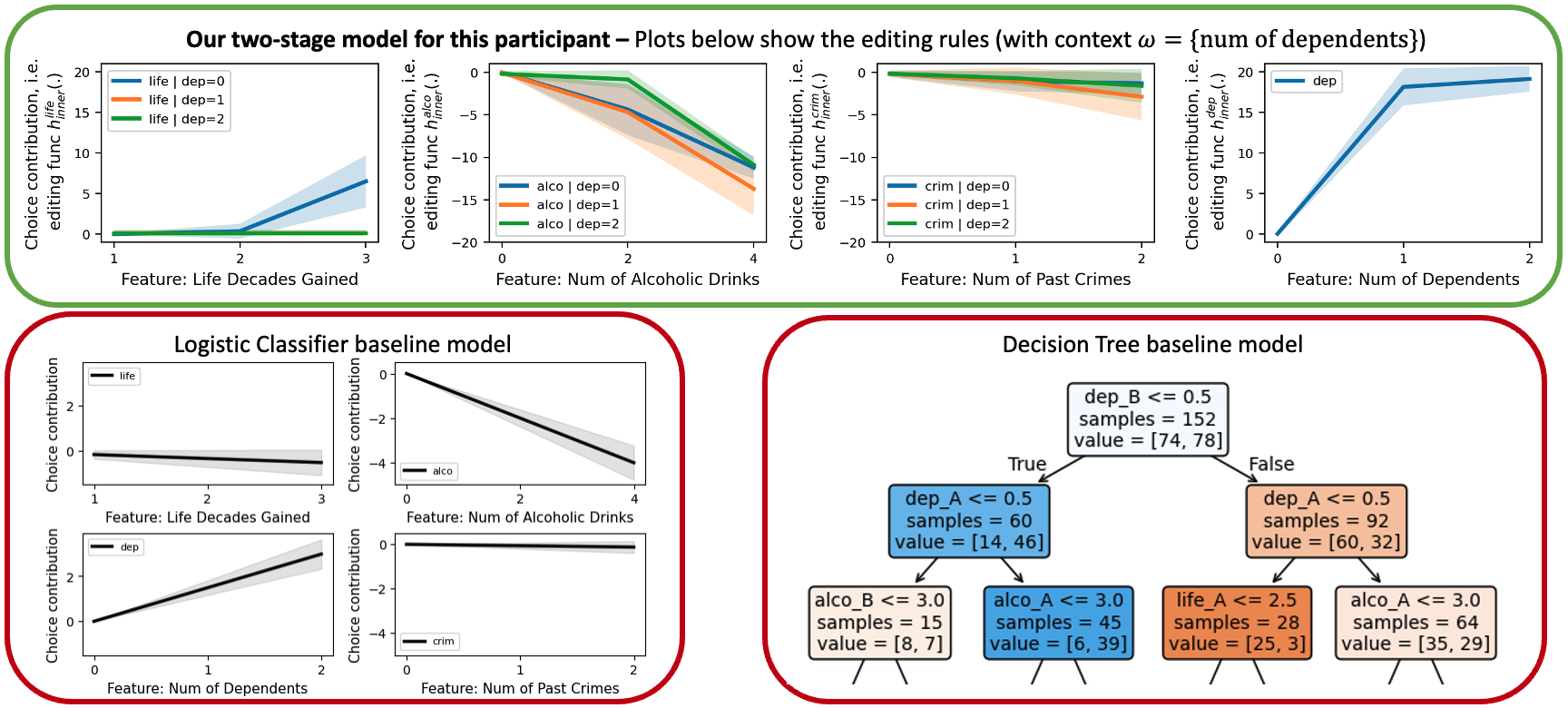}
    \caption{Interpretable models learned for
    P4
    of 
    Study One.
    For our two-stage model and logistic classifier, we plot the 
    contributions 
    of each feature to the final choice, 
    i.e., we plot $\inner(\cdot)$.
    We also show the decision tree, truncated to 
    the first three layers. 
    Note the differences in
    model interpretation
    between these
    strategies, with our model providing insight into the heuristics used by
    P4.\todo{''our 2-stage model'' logi  classifier?}%
    }
    \label{fig:example_models}
\end{figure}

We test our modeling strategy to learn computational models of people's moral judgments regarding the allocation of kidneys to patients based on their medical attributes (e.g., transplant outcomes) and non-medical attributes (e.g., dependents and lifestyles) \citep{boerstler2024stability,chan2024should}.
Prior works have noted several limitations of existing modeling approaches in this domain \citep{keswani2025can}, making this is a relevant case study to analyze our proposed framework. 

%
%

\paragraph{Real-world dataset} We use the dataset 
of \citet{boerstler2024stability},
%
which spans two studies where
participants were presented with several kidney allocation scenarios
(15 participants in Study One and 40 in Study Two).
In both studies, each kidney allocation scenario comprises a pairwise comparison between
two patients
(see \cref{fig:example_ka} for an example).
%
Participants were instructed to choose which
patient
should be given an available kidney.
%
In Study One, the patient features presented are the patient's \emph{number of dependents}, projected \emph{life years gained} from the transplant (LYG), \emph{
alcoholic drinks per day}, and \emph{number of crimes committed}.
In Study Two, the patient features are the patient's \emph{number of elderly dependents}, 
LYG, 
\emph{years waiting} for the transplant, \emph{weekly work hours post-transplant}, and \emph{obesity level}.
\iflongversion
Other dataset details are presented in \cref{sec:appendix_empirical}.
\else
We present further details in \cref{sec:appendix_empirical}.
\fi
\todo{ years vs decades?}

\paragraph{Synthetic dataset} We also created a synthetic dataset representing multiple simulated decision-makers.
%
This 
dataset contains pairwise comparisons of hypothetical kidney patients described using four features: 
\emph{number of dependents}, LYG, 
\emph{years waiting}, 
and \emph{number of crimes} committed.
The 
goal
is to simulate decision-makers who explicitly use the heuristics 
observed in prior studies to assess how well our model and other baselines 
recover these heuristics.
The heuristics 
are taken from \citet{keswani2025can}, who provide user-reported qualitative accounts of decision rules people use for kidney allocation decisions.
%
Using their observations, we create five simulated decision-makers, DM1--DM5, each using different decision rules over the presented patient features.
%
For instance, DM1 decides between A and B with the following process:
(a) assign 1 point to the patient with a higher 
LYG; (b) assign 1 point to each patient with dependents; (c) assign 1 point to each patient
on the waiting list for
$\null{>}6$ years; (d) 
add $\mathcal{N}(0, 1)$ noise
(i.e., a homoskedastic Thurstone-Mosteller process) 
to the difference of patient scores, and choose A if 
difference$>0$ and B otherwise (DM1's process is mathematically described in \cref{fig:simulated_participant}).
Other simulated decision-makers (DM2--DM5) also use various heuristics, such as \textit{thresholding}, \textit{diminishing returns}, and \textit{tallying} (described in \cref{sec:appendix_empirical}).
%
%
%
%
Our dataset contains
1000 pairwise comparisons from each simulated DM.

\iflongversion
\textcolor{brown}{
CYRUS: can we say:
\[
H(A, B) = \Phi( \sgn(A_{\yearsg} - B_{\yearsg}) + \1_{1+}(A_{\deps}) - \1_{1+}(B_{\deps}) + \1_{7+}(A_{\yearsw}) - \1_{7+}(B_{\yearsw})  )
\]
NB due to the $\sgn(A_{\yearsg} - B_{\yearsg})$ term above, which applies an irreducible nonlinearity to a feature of both $A$ and $B$, this rule does not admit an atomic model, thus by the contrapositive of \cref{thm:factoring}~item~\cref{thm:factoring:disc}, and observing that this process clearly respects complementarity, we may conclude it does not respect weak transitivity.
In hindsight, this is obvious, as if patients $A$, $B$, and $C$ are identical except possibly in 
LYG, if $H(A, B) = H(A, C) = \Phi(1) \approx 0.841$, i.e., if $A$ stands the most to gain, we still can not determine which of $B$ or $C$ has higher LYG, thus we may only conclude $H(B, C) \in \{\Phi(-1), \frac{1}{2}, \Phi(1) \}$.
\\
Big question: if $\sgn$ were instead an invertible function, would we still have WT, thus would there exist an equivalent atomic model? Yes, but $\Phi$ and the other feature responses may need to change?
}
\fi

\paragraph{Methodology and Baselines} \ccolor{We compare our model to several learning approaches, namely Bradley-Terry and Drift Diffusion models from cognitive science literature,\todo{citations?} and common supervised learning approaches, such as logistic and elastic-net classifiers, SVM, 
GAM with spline terms, decision trees, multi-layer perception (MLP), $k$-NN\todo{what $k$?}, and random forests.
%
Drift Diffusion is the only model we consider that requires 
reaction times per scenario,  
thus it is not run for the simulated DMs.
}%
%
We use logistic models and decision tree models as the ``interpretable" methods to compare our model's interpretability to. 
%
%
Since moral judgments
vary substantially
across individuals,
all experiments operate
over individual-participant-level data. 
For each decision-maker (real or simulated), we use a
70-30 train-test split, and report 
test accuracy over 20 repetitions.
\ccolor{%
For our framework, we minimize the predictive loss to learn the editing functions $\inner^{\cdot, \cdot}$ (which assign score$\in \R$ to each feature value), constraining all $\inner^{\cdot, \cdot}$ to be monotonic (since all features in our dataset can be seen to impact the final choice monotonically)
We implement two variants of this framework: (A) with cross-entropy loss and $\sigma(x) = (1+e^{-x})^{-1}$, and (B) with hinge loss and $\sigma$ as the identity function.\todo{???}
%
The first variant is aligned with the probabilistic framework of \cref{thm:factoring}, while the second framework is better suited to assessing our learned model on the ``hard classification'' metric of 0-1 predictive accuracy.}
%
%
Context $\omega$ is limited to one feature for the real-world datasets, chosen using cross-validation, and $\emptyset$ for the synthetic dataset.
%
%
\Cref{sec:appendix_empirical} provides additional implementation details.
\begin{table}
    \centering
    \begin{tabular}{lccc}
    \toprule
        & \multicolumn{3}{c}{
        Average Accuracy (stddev
        )
        }\\
         Model & \scalebox{1}[1]{\!\!Study One} & \scalebox{1}[1]{\!\!Study Two} & 
         \scalebox{1}[1]{\!\!\!Simulated\!} \\
    \midrule
        \textbf{Cognitive models} \hspace{-1cm} \\
         \quad \scalebox{1}[1]{Drift-Diffusion} & .89 (.05) & .88 (.05) & -- \\
         \quad \scalebox{1}[1]{Bradley-Terry} & \textbf{.90} (.06) & .78 (.06)  & .77 (.06)\\
        \textbf{Supervised learning models}\hspace{-2cm} \\
         \quad Logistic Clf & \textbf{.90} (.06) & {.89} (.05) & .85 (.07) \\
         \quad Elastic Net & .89 (.04) & .88 (.05) & .85 (.07) \\
         \quad SVM & .89 (.06) & .89 (.05) & .85 (.07) \\
         \quad GAM & .87 (.09) & .84 (.11)  & .88 (.08) \\
         \quad \scalebox{1}[1]{Decision Tree\!\!} & .83 (.06) & .79 (.06) & .82 (.11) \\
         \quad k-NN & .85 (.06) & .82 (.05)  & .79 (.08) \\
         \quad MLP & .89 (.05) & .86 (.06) & .87 (.08) \\
         \quad \scalebox{1}[1]{Random Forest\!\!\!} & .86 (.05) & .85 (.04) & .87 (.08) \\
    \midrule
        \textbf{Our Models}\\
         \quad 
            \scalebox{1}[1]{\!cross-entropy loss\!\!\!\!} & \textbf{.90} (.06) & \textbf{.90} (.05) & \textbf{.89} (.10) \\
         \quad 
            hinge loss & \textbf{.90} (.06) & {.89} (.06) & \textbf{.89} (.08) \\
    \bottomrule 
    \end{tabular}
    %
    \caption{
    Performance 
    statistics of all individualized
    models on 
    kidney allocation datasets. 
    \todo{TODO explicate}
    }
    \label{tab:agg_performance}
\end{table}

\paragraph{Results}
Across all datasets, we observe that
our cognitively-motivated models
achieve high accuracy and provide deeper insight into decision-making processes than other baselines.
%


\textit{Aggregated Performance.}
%
The mean accuracy of each model over all participants\todo{mean or weighted?} is 
shown
in \cref{tab:agg_performance}.
Overall,
on
both real 
and synthetic data, our framework produces models 
\emph{at least 
as accurate} as all baselines.
%
Individual-level performance 
is presented in \cref{sec:appendix_empirical}. 
Additionally, 
%
%
editing functions $\inner$ learned for the decision-makers provide insight into how they process the input features to reach their final decision. We demonstrate this interpretability through case studies of the editing functions learned for the simulated decision-maker DM1 and a real-world participant P4 from Study One.
%

%


\textit{
Participant P4 in Study One.}
%
%
%
%
The two-stage model learned for
P4 (w/ hinge loss) is illustrated in \cref{fig:example_models} (top row).
%
The editing function plots show the following nuances of their decision-making process for pairwise comparisons of kidney patients: 
(a) \textit{number of dependents} and \textit{number of alcoholic drinks} are the two most important features, as reflected by the large choice contributions of these features; 
(b) \textit{number of past crimes} is considered mostly irrelevant by this participant; 
(c) \textit{life decades gained} is relevant only when the patient has zero dependents, indicating a conditional interaction between these two features;
%
(d) for \textit{number of dependents}, a difference of 1 vs 0 dependents is much more significant to the final choice than a difference of 2 vs 1 dependents --- approximately a threshold decision rule such that any non-zero number of dependents contributes equally;
(e) similarly for \textit{life decades gained} and  \textit{number of alcoholic drinks}, for certain conditions on \textit{number of dependents}, the participant employs threshold decision rules.
Our model is able to learn all of these decision-making nuances with an accuracy of 0.78 ($\pm$~0.05), but a trained logistic classifier (bottom-left in \cref{fig:example_models}) has a lower accuracy of 0.76 ($\pm$~0.04) and only uncovered points (a) and (b) above, while a trained decision-tree model (bottom-right in \cref{fig:example_models}) has an accuracy of 0.70 ($\pm$~0.03) and only uncovered points (a), (c), and (d).
%

\begin{figure}
    \centering
    \includegraphics[width=\linewidth]{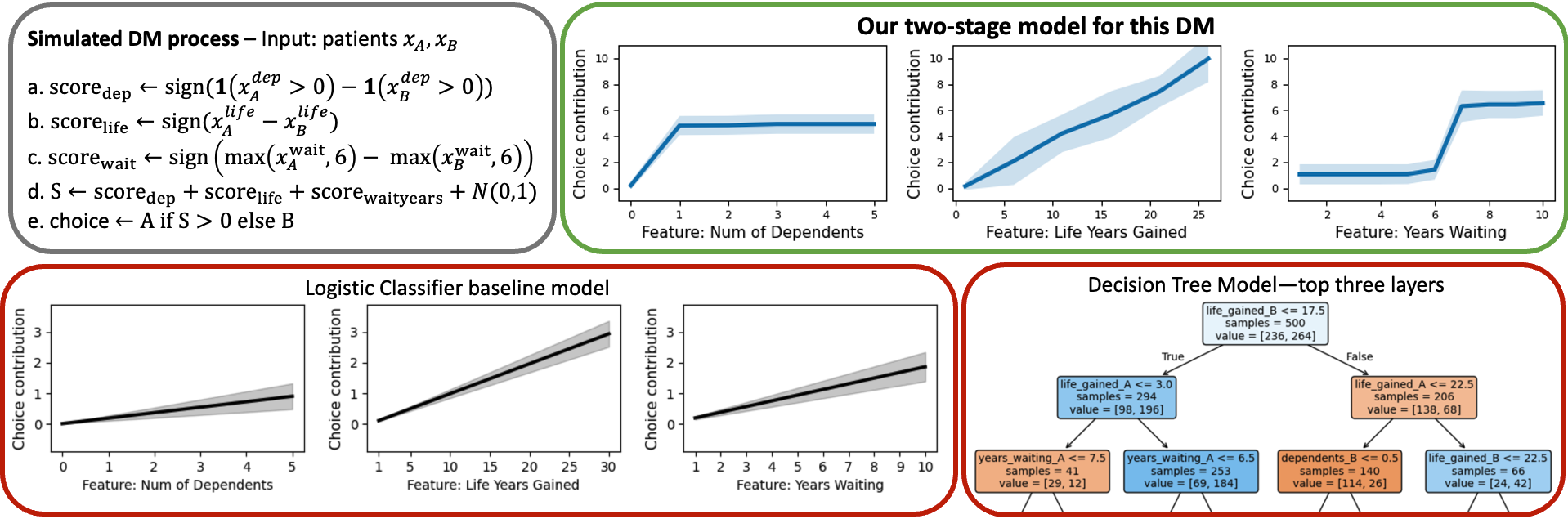}
    \caption{
    Models learned for the simulated decision-maker DM1. On the top left, we present the mathematical description of the DM1's process. Once again, we present each model's interpretation of the process used by the decision-maker.
    Observe that our
    two-stage model most accurately captures the simulated process.}
    \label{fig:simulated_participant}
\end{figure}

\textit{
Simulated DM1.}
%
%
\Cref{fig:simulated_participant} (top right) shows how our model recovers the rules used by DM1. 
E.g., for the feature \textit{number for dependents}, $\smash{\inner^{\deps}}$ captures the increase in choice contribution when this feature has a non-zero value, and for the \textit{years waiting} feature, $\smash{\inner^{\yearsw}}$ learns the 
increased contribution of values exceeding 6 years.
Both successfully represent the threshold functions used in DM1.
Note that neither of these observations can be easily inferred from logistic or decision tree models (bottom row of \cref{fig:simulated_participant}).
Our model is also more accurate (0.76 $\pm$~0.02) than logistic (0.74 $\pm$~0.02) and decision tree (0.68 $\pm$~0.04) models.
%
\Cref{sec:appendix_empirical} shows
similar results
for other simulated decision-makers.
%
Overall, our models provide a better understanding of the decision-making processes, without sacrificing predictive accuracy.

\section{Discussion, Limitations, and Future Work} \label{sec:limitations}

\todo{[fidelity adjective]

[decisions we make and the process by which we make them]

[cognitively-faithful: connect to the PROCESS]

[process matters: human-like errors?]
}

This work provides
a modeling strategy to learn human decision processes from 
pairwise comparisons.
The primary goal of this strategy is \textit{fidelity}, such that best-fit models from these classes are more faithful to humans' actual decision-making than
standard hypothesis spaces.
%
%
Our theoretical analysis demonstrates that these classes emerge from natural decision-making axioms, and our empirical analysis shows that greater \textit{fidelity} would lead to greater model \textit{accuracy} in some cases.
%
%
The advantages of cognitively-faithful models will depend on the application. 
In applications like online recommender systems, it may be sufficient to predict human behavior accurately, without simulating their actual decision processes.
However, cognitively-faithful models would be desirable in other high-stakes AI domains (e.g., healthcare and sentencing), where the interpretability and alignment of AI’s decisions with the users is pivotal in establishing trust
\citep{jacovi2021formalizing}. 
Cognitively-faithful models would also be preferable in domains with no ``ground truth'' (like kidney allocation), where model validation requires the user to understand and evaluate the model’s decision process.
Lastly, cognitively-faithful models are essential for any use of AI in moral domains, since stakeholders expect an AI to justify its moral decisions in a similar manner 
as humans \citep{lima2021human}. 
%
\iflongversion
For AI applications that satisfy any of the above criteria,
cognitively-faithful models are preferred over generic modeling methods.
\fi

\todo{[move limitations to intro if possible]}
\iflongversion
TODO

Yet, the choice to use cognitively-faithful models should be informed by stakeholders. Our axiomatic analysis can be viewed as a technical tool to allow practitioners to make better-informed choices about properties of the models they build to assist their users, as it is easy to qualitatively assess whether properties like transitivity and feature independence are desirable when simulating a user's decision processes.

At the same time, not all cognitive processes are suitable to replicate in a computational model, such as heuristics that people employ when facing decision fatigue \citep{pignatiello2020decision} and cognitive or framing biases that negatively affect their judgments
\citep{tversky1981framing}. 
While these heuristics should not be a part of the computational model that assists the user in future decision-making, our two-stage framework could still be used to learn which heuristic rules are present in the data, to be able to ``correct'' for them in the learned computational model, e.g., by replacing the heuristic 
in the learned model 
with a more appropriate user-specified decision rule. 
Hence, even in this case, learning the cognitive processes is useful  
in correcting the identified biases when creating an “ideal” cognitively faithful model.
%
\fi

%
Our mathematical model is motivated by human behavior, but due to the complexity of the latter, any such effort is inherently limited.
The interpretability features of our framework need further validation through real-world user studies.
While this work provides a proof-of-concept for a computational framework to learn from human decisions, future user studies can help assess the extent to which users understand and trust the learned two-stage models. 
%
%
%
Our axioms characterize the ``ideal'' decision-making process of an individual; i.e., the process they would prefer to follow in the absence of any internal/external constraints, or the process that describes their judgments about what ``should be done.''
However, human decisions are known to deviate from the ideal in many ways, e.g., 
sometimes exhibiting transitivity and complementarity violations \citep{tversky1990causes}, among others.
One thus wonders, does 
precluding such deviations
aligns an AI system to the processes humans actually use, or instead to some idealized version thereof?
%
%
Both have their virtues, but we must be aware of their differences.
We hope that future work can study this issue, and explore ways to extend our two-stage hypothesis classes to quantify deviations of one's ``ideal" judgments from their implemented choices.
\iflongversion
Our empirical analysis of kidney allocation judgments may also be limited, since these are hypothetical moral judgments, which could differ from real-world moral decisions \citep{yu2022neural}.
While the modeling strategy should still be applicable for real-world decisions, the impact of this practical limitation can be investigated in future work.
%
\fi
\iflongversion
Beyond these, our work focuses primarily on pairwise comparisons,
and future work can explore extensions of our approach to other elicitation frameworks (e.g., ranking) as well.
\fi


\vfill
\pagebreak[4]
\clearpage

\section*{Acknowledgments}

VK, CC, JSB, and WSA are grateful for the financial support from OpenAI and Duke University.

HH acknowledges support from the CMU-NIST Cooperative Research Center on AI Measurement Science \& Engineering (AIMSEC), and the AI Research Institutes Program funded by the National Science Foundation under AI Institute for Societal Decision Making (AI-SDM), Award No. 2229881. Any opinions, findings, conclusions, or recommendations expressed in this material are those of the authors and do not reflect the views of the funding agencies.


\section*{Reproducibility statement}
For theoretical results, full proofs of all results are provided in Appendix~\ref{appx:proofs}.
To facilitate reproducibility of the empirical results, the primary methodological details of our simulations are described in Section~\ref{sec:experiments}, and additional descriptions, including simulated DM construction processes, training procedures, and hyperparameter values, are given in Appendix~\ref{sec:appendix_empirical}.
Additionally, source code for these simulations is available at \url{https://github.com/vijaykeswani/Cognitively-Faithful-Decision-Models/}.
%

\bibliography{references}

@incollection{montgomery1983decision,
	title        = {Decision rules and the search for a dominance structure: Towards a process model of decision making},
	author       = {Montgomery, Henry},
	year         = 1983,
	booktitle    = {Advances in psychology},
	publisher    = {Elsevier},
	volume       = 14,
	pages        = {343--369}
}

@incollection{kahneman2013prospect,
	title        = {Prospect theory: An analysis of decision under risk},
	author       = {Kahneman, Daniel and Tversky, Amos},
	year         = 2013,
	booktitle    = {Handbook of the fundamentals of financial decision making: Part I},
	publisher    = {World Scientific},
	pages        = {99--127}
}

@article{payne1976task,
	title        = {Task complexity and contingent processing in decision making: An information search and protocol analysis},
	author       = {Payne, John W},
	year         = 1976,
	journal      = {Organizational behavior and human performance},
	publisher    = {Elsevier},
	volume       = 16,
	number       = 2,
	pages        = {366--387}
}

@article{ajzen1996social,
	title        = {The social psychology of decision making},
	author       = {Ajzen, Icek},
	year         = 1996,
	journal      = {Social psychology: Handbook of basic principles},
	pages        = {297--325}
}

@article{kubanek2017optimal,
	title        = {Optimal decision making and matching are tied through diminishing returns},
	author       = {Kubanek, Jan},
	year         = 2017,
	journal      = {Proceedings of the National Academy of Sciences},
	publisher    = {National Academy of Sciences},
	volume       = 114,
	number       = 32,
	pages        = {8499--8504}
}

@article{shah2008heuristics,
	title        = {Heuristics made easy: an effort-reduction framework.},
	author       = {Shah, Anuj K and Oppenheimer, Daniel M},
	year         = 2008,
	journal      = {Psychological bulletin},
	publisher    = {American Psychological Association},
	volume       = 134,
	number       = 2,
	pages        = 207
}

@book{gigerenzer2000simple,
	title        = {Simple heuristics that make us smart},
	author       = {Gigerenzer, Gerd and Todd, Peter M and ABC Research Group, the and others},
	year         = 2000,
	publisher    = {Oxford University Press}
}

@article{tversky1972elimination,
	title        = {Elimination by aspects: A theory of choice.},
	author       = {Tversky, Amos},
	year         = 1972,
	journal      = {Psychological review},
	publisher    = {American Psychological Association},
	volume       = 79,
	number       = 4,
	pages        = 281
}

@incollection{czerlinski1999good,
	title        = {How good are simple heuristics?},
	author       = {Czerlinski, Jean and Gigerenzer, Gerd and Goldstein, Daniel G},
	year         = 1999,
	booktitle    = {Simple heuristics that make us smart},
	publisher    = {Oxford University Press},
	pages        = {97--118}
}

@article{gigerenzer2011heuristic,
	title        = {Heuristic decision making},
	author       = {Gigerenzer, Gerd and Gaissmaier, Wolfgang},
	year         = 2011,
	journal      = {Annual review of psychology},
	publisher    = {Annual Reviews},
	volume       = 62,
	number       = 2011,
	pages        = {451--482}
}

@article{persson2022prominence,
	title        = {The prominence effect in health-care priority setting},
	author       = {Persson, Emil and Erlandsson, Arvid and Slovic, Paul and V{\"a}stfj{\"a}ll, Daniel and Tingh{\"o}g, Gustav},
	year         = 2022,
	journal      = {Judgment and Decision Making},
	volume       = 17,
	number       = 6,
	pages        = {1379--1391}
}

@article{tversky1988contingent,
	title        = {Contingent weighting in judgment and choice.},
	author       = {Tversky, Amos and Sattath, Shmuel and Slovic, Paul},
	year         = 1988,
	journal      = {Psychological review},
	publisher    = {American Psychological Association},
	volume       = 95,
	number       = 3,
	pages        = 371
}

@article{bradley1952rank,
	title        = {Rank analysis of incomplete block designs: I. The method of paired comparisons},
	author       = {Bradley, Ralph Allan and Terry, Milton E},
	year         = 1952,
	journal      = {Biometrika},
	publisher    = {JSTOR},
	volume       = 39,
	number       = {3/4},
	pages        = {324--345}
}

@article{newman2023efficient,
	title        = {Efficient computation of rankings from pairwise comparisons},
	author       = {Newman, Mark EJ},
	year         = 2023,
	journal      = {Journal of Machine Learning Research},
	volume       = 24,
	number       = 238,
	pages        = {1--25}
}

@article{chen2004survey,
	title        = {Survey of preference elicitation methods},
	author       = {Chen, Li and Pu, Pearl},
	year         = 2004
}

@inproceedings{kalloori2018eliciting,
	title        = {Eliciting pairwise preferences in recommender systems},
	author       = {Kalloori, Saikishore and Ricci, Francesco and Gennari, Rosella},
	year         = 2018,
	booktitle    = {Proceedings of the 12th acm conference on recommender systems},
	pages        = {329--337}
}

@inproceedings{guo2010real,
	title        = {Real-time multiattribute Bayesian preference elicitation with pairwise comparison queries},
	author       = {Guo, Shengbo and Sanner, Scott},
	year         = 2010,
	booktitle    = {Proceedings of the Thirteenth International Conference on Artificial Intelligence and Statistics},
	pages        = {289--296},
	organization = {JMLR Workshop and Conference Proceedings}
}

@article{rafailov2023direct,
	title        = {Direct preference optimization: Your language model is secretly a reward model},
	author       = {Rafailov, Rafael and Sharma, Archit and Mitchell, Eric and Manning, Christopher D and Ermon, Stefano and Finn, Chelsea},
	year         = 2023,
	journal      = {Advances in Neural Information Processing Systems},
	volume       = 36,
	pages        = {53728--53741}
}

@article{jiang2024survey,
	title        = {A survey on human preference learning for large language models},
	author       = {Jiang, Ruili and Chen, Kehai and Bai, Xuefeng and He, Zhixuan and Li, Juntao and Yang, Muyun and Zhao, Tiejun and Nie, Liqiang and Zhang, Min},
	year         = 2024,
	journal      = {arXiv preprint arXiv:2406.11191}
}

@article{ziegler2019fine,
	title        = {Fine-tuning language models from human preferences},
	author       = {Ziegler, Daniel M and Stiennon, Nisan and Wu, Jeffrey and Brown, Tom B and Radford, Alec and Amodei, Dario and Christiano, Paul and Irving, Geoffrey},
	year         = 2019,
	journal      = {arXiv preprint arXiv:1909.08593}
}

@article{ouyang2022training,
	title        = {Training language models to follow instructions with human feedback},
	author       = {Ouyang, Long and Wu, Jeffrey and Jiang, Xu and Almeida, Diogo and Wainwright, Carroll and Mishkin, Pamela and Zhang, Chong and Agarwal, Sandhini and Slama, Katarina and Ray, Alex and others},
	year         = 2022,
	journal      = {Advances in neural information processing systems},
	volume       = 35,
	pages        = {27730--27744}
}

@article{lichtenstein2006construction,
	title        = {The construction of preference: An overview},
	author       = {Lichtenstein, Sarah and Slovic, Paul},
	year         = 2006,
	journal      = {The construction of preference},
	volume       = 1,
	pages        = {1--40}
}

@article{ben2019foundations,
	title        = {Foundations of stated preference elicitation: Consumer behavior and choice-based conjoint analysis},
	author       = {Ben-Akiva, Moshe and McFadden, Daniel and Train, Kenneth and others},
	year         = 2019,
	journal      = {Foundations and Trends{\textregistered} in Econometrics},
	publisher    = {Now Publishers, Inc.},
	volume       = 10,
	number       = {1-2},
	pages        = {1--144}
}

@article{charness2013experimental,
	title        = {Experimental methods: Eliciting risk preferences},
	author       = {Charness, Gary and Gneezy, Uri and Imas, Alex},
	year         = 2013,
	journal      = {Journal of economic behavior \& organization},
	publisher    = {Elsevier},
	volume       = 87,
	pages        = {43--51}
}

@article{ji2023ai,
	title        = {Ai alignment: A comprehensive survey},
	author       = {Ji, Jiaming and Qiu, Tianyi and Chen, Boyuan and Zhang, Borong and Lou, Hantao and Wang, Kaile and Duan, Yawen and He, Zhonghao and Zhou, Jiayi and Zhang, Zhaowei and others},
	year         = 2023,
	journal      = {arXiv preprint arXiv:2310.19852},
	eprint       = {2310.19852},
	archiveprefix = {arXiv},
	primaryclass = {cs.AI}
}

@inproceedings{feffer2023preference,
	title        = {From preference elicitation to participatory ML: A critical survey \& guidelines for future research},
	author       = {Feffer, Michael and Skirpan, Michael and Lipton, Zachary and Heidari, Hoda},
	year         = 2023,
	booktitle    = {Proceedings of the 2023 AAAI/ACM Conference on AI, Ethics, and Society},
	pages        = {38--48}
}

@article{mukherjee2024optimal,
	title        = {Optimal design for human preference elicitation},
	author       = {Mukherjee, Subhojyoti and Lalitha, Anusha and Kalantari, Kousha and Deshmukh, Aniket Anand and Liu, Ge and Ma, Yifei and Kveton, Branislav},
	year         = 2024,
	journal      = {Advances in Neural Information Processing Systems},
	volume       = 37,
	pages        = {90132--90159}
}

@inproceedings{keswani2025can,
  author       = {Vijay Keswani and
                  Vincent Conitzer and
                  Walter Sinnott{-}Armstrong and
                  Breanna K. Nguyen and
                  Hoda Heidari and
                  Jana Schaich Borg},
  title        = {Can {AI} Model the Complexities of Human Moral Decision-making? {A}
                  Qualitative Study of Kidney Allocation Decisions},
  booktitle    = {Proceedings of the 2025 {CHI} Conference on Human Factors in Computing
                  Systems},
  pages        = {249:1--249:17},
  publisher    = {{ACM}},
  year         = {2025},
  url          = {https://doi.org/10.1145/3706598.3714167},
  doi          = {10.1145/3706598.3714167},
  timestamp    = {Fri, 09 May 2025 20:28:18 +0200},
  biburl       = {https://dblp.org/rec/conf/chi/KeswaniCSNHB25.bib},
  bibsource    = {dblp computer science bibliography, https://dblp.org}
}

@inproceedings{lima2021human,
	title        = {Human perceptions on moral responsibility of {AI}: A case study in {AI}-assisted bail decision-making},
	author       = {Lima, Gabriel and Grgi{\'c}-Hla{\v{c}}a, Nina and Cha, Meeyoung},
	year         = 2021,
	booktitle    = {Proceedings of the 2021 CHI conference on human factors in computing systems},
	pages        = {1--17}
}

@article{fintz2022using,
	title        = {Using deep learning to predict human decisions and using cognitive models to explain deep learning models},
	author       = {Fintz, Matan and Osadchy, Margarita and Hertz, Uri},
	year         = 2022,
	journal      = {Scientific reports},
	publisher    = {Nature Publishing Group UK London},
	volume       = 12,
	number       = 1,
	pages        = 4736
}

@article{lin2022predicting,
	title        = {Predicting human decision making in psychological tasks with recurrent neural networks},
	author       = {Lin, Baihan and Bouneffouf, Djallel and Cecchi, Guillermo},
	year         = 2022,
	journal      = {PloS one},
	publisher    = {Public Library of Science San Francisco, CA USA},
	volume       = 17,
	number       = 5,
	pages        = {e0267907}
}

@article{erev2010choice,
	title        = {A choice prediction competition: Choices from experience and from description},
	author       = {Erev, Ido and Ert, Eyal and Roth, Alvin E and Haruvy, Ernan and Herzog, Stefan M and Hau, Robin and Hertwig, Ralph and Stewart, Terrence and West, Robert and Lebiere, Christian},
	year         = 2010,
	journal      = {Journal of Behavioral Decision Making},
	publisher    = {Wiley Online Library},
	volume       = 23,
	number       = 1,
	pages        = {15--47}
}

@inproceedings{plonsky2017psychological,
	title        = {Psychological forest: Predicting human behavior},
	author       = {Plonsky, Ori and Erev, Ido and Hazan, Tamir and Tennenholtz, Moshe},
	year         = 2017,
	booktitle    = {Proceedings of the AAAI Conference on Artificial Intelligence},
	volume       = 31,
	number       = 1
}

@article{peterson2021using,
	title        = {Using large-scale experiments and machine learning to discover theories of human decision-making},
	author       = {Peterson, Joshua C and Bourgin, David D and Agrawal, Mayank and Reichman, Daniel and Griffiths, Thomas L},
	year         = 2021,
	journal      = {Science},
	publisher    = {American Association for the Advancement of Science},
	volume       = 372,
	number       = 6547,
	pages        = {1209--1214}
}

@inproceedings{bourgin2019cognitive,
	title        = {Cognitive model priors for predicting human decisions},
	author       = {Bourgin, David D and Peterson, Joshua C and Reichman, Daniel and Russell, Stuart J and Griffiths, Thomas L},
	year         = 2019,
	booktitle    = {International conference on machine learning},
	pages        = {5133--5141},
	organization = {PMLR}
}

@incollection{crockett2016computational,
	title        = {Computational modeling of moral decisions},
	author       = {Crockett, Molly J},
	year         = 2016,
	booktitle    = {The social psychology of morality},
	publisher    = {Routledge},
	pages        = {71--90}
}

@article{freedman2020adapting,
	title        = {Adapting a kidney exchange algorithm to align with human values},
	author       = {Freedman, Rachel and Borg, Jana Schaich and Sinnott-Armstrong, Walter and Dickerson, John P and Conitzer, Vincent},
	year         = 2020,
	journal      = {Artificial Intelligence},
	publisher    = {Elsevier},
	volume       = 283,
	pages        = 103261
}

@inproceedings{boerstler2024stability,
	title        = {On the stability of moral preferences: A problem with computational elicitation methods},
	author       = {Boerstler, Kyle and Keswani, Vijay and Chan, Lok and Borg, Jana Schaich and Conitzer, Vincent and Heidari, Hoda and Sinnott-Armstrong, Walter},
	year         = 2024,
	booktitle    = {Proceedings of the AAAI/ACM Conference on AI, Ethics, and Society},
	volume       = 7,
	pages        = {156--167}
}

@article{sinnott2021ai,
	title        = {How {AI} can AID bioethics},
	author       = {Sinnott-Armstrong, Walter and Skorburg, Joshua August and others},
	year         = 2021,
	journal      = {Journal of Practical Ethics},
	publisher    = {Michigan Publishing},
	volume       = 9,
	number       = 1
}

@article{lee2019webuildai,
	title        = {{WeBuildAI}: Participatory framework for algorithmic governance},
	author       = {Lee, Min Kyung and Kusbit, Daniel and Kahng, Anson and Kim, Ji Tae and Yuan, Xinran and Chan, Allissa and See, Daniel and Noothigattu, Ritesh and Lee, Siheon and Psomas, Alexandros and others},
	year         = 2019,
	journal      = {Proceedings of the ACM on human-computer interaction},
	publisher    = {ACM New York, NY, USA},
	volume       = 3,
	number       = {CSCW},
	pages        = {1--35}
}

@inproceedings{johnston2023deploying,
	title        = {Deploying a Robust Active Preference Elicitation Algorithm on MTurk: Experiment Design, Interface, and Evaluation for COVID-19 Patient Prioritization},
	author       = {Johnston, Caroline M and Vossler, Patrick and Blessenohl, Simon and Vayanos, Phebe},
	year         = 2023,
	booktitle    = {Proceedings of the 3rd ACM Conference on Equity and Access in Algorithms, Mechanisms, and Optimization},
	pages        = {1--10}
}

@article{awad2018moral,
	title        = {The moral machine experiment},
	author       = {Awad, Edmond and Dsouza, Sohan and Kim, Richard and Schulz, Jonathan and Henrich, Joseph and Shariff, Azim and Bonnefon, Jean-Fran{\c{c}}ois and Rahwan, Iyad},
	year         = 2018,
	journal      = {Nature},
	publisher    = {Nature Publishing Group},
	volume       = 563,
	number       = 7729,
	pages        = {59--64}
}

@article{Liscio2024,
	title        = {Value Preferences Estimation and Disambiguation in Hybrid Participatory Systems},
	author       = {Enrico Liscio and Luciano Cavalcante Siebert and Catholijn M. Jonker and Pradeep K. Murukannaiah},
	year         = 2024,
	journal      = {CoRR},
	volume       = {abs/2402.16751},
	doi          = {10.48550/ARXIV.2402.16751},
	url          = {https://doi.org/10.48550/arXiv.2402.16751},
	eprinttype   = {arXiv},
	eprint       = {2402.16751},
	timestamp    = {Mon, 25 Mar 2024 15:38:17 +0100},
	biburl       = {https://dblp.org/rec/journals/corr/abs-2402-16751.bib},
	bibsource    = {dblp computer science bibliography, https://dblp.org}
}

@inproceedings{noothigattu2018voting,
	title        = {A voting-based system for ethical decision making},
	author       = {Noothigattu, Ritesh and Gaikwad, Snehalkumar and Awad, Edmond and Dsouza, Sohan and Rahwan, Iyad and Ravikumar, Pradeep and Procaccia, Ariel},
	year         = 2018,
	booktitle    = {Proceedings of the AAAI Conference on Artificial Intelligence},
	volume       = 32,
	number       = 1
}

@article{noothigattu2020axioms,
	title        = {Axioms for learning from pairwise comparisons},
	author       = {Noothigattu, Ritesh and Peters, Dominik and Procaccia, Ariel D},
	year         = 2020,
	journal      = {Advances in Neural Information Processing Systems},
	volume       = 33,
	pages        = {17745--17754}
}

@article{ge2024axioms,
	title        = {Axioms for {AI} alignment from human feedback},
	author       = {Ge, Luise and Halpern, Daniel and Micha, Evi and Procaccia, Ariel D and Shapira, Itai and Vorobeychik, Yevgeniy and Wu, Junlin},
	year         = 2024,
	journal      = {arXiv preprint arXiv:2405.14758}
}

@article{tversky1974judgment,
	title        = {Judgment under Uncertainty: Heuristics and Biases: Biases in judgments reveal some heuristics of thinking under uncertainty.},
	author       = {Tversky, Amos and Kahneman, Daniel},
	year         = 1974,
	journal      = {Science},
	publisher    = {American association for the advancement of science},
	volume       = 185,
	number       = 4157,
	pages        = {1124--1131}
}

@article{tversky1981framing,
	title        = {The framing of decisions and the psychology of choice},
	author       = {Tversky, Amos and Kahneman, Daniel},
	year         = 1981,
	journal      = {science},
	publisher    = {American Association for the Advancement of Science},
	volume       = 211,
	number       = 4481,
	pages        = {453--458}
}

@article{chan2024should,
	title        = {Should Responsibility Affect Who Gets a Kidney?},
	author       = {Chan, Lok and Sinnott-Armstrong, Walter and Borg, Jana Schaich and Conitzer, Vincent},
	year         = 2024,
	journal      = {Responsibility and Healthcare},
	publisher    = {Oxford University Press},
	pages        = 35
}

@inproceedings{kim2018computational,
	title        = {A computational model of commonsense moral decision making},
	author       = {Kim, Richard and Kleiman-Weiner, Max and Abeliuk, Andr{\'e}s and Awad, Edmond and Dsouza, Sohan and Tenenbaum, Joshua B and Rahwan, Iyad},
	year         = 2018,
	booktitle    = {Proceedings of the 2018 AAAI/ACM Conference on AI, Ethics, and Society},
	pages        = {197--203}
}

@article{noothigattu2019teaching,
	title        = {Teaching {AI} agents ethical values using reinforcement learning and policy orchestration},
	author       = {Noothigattu, Ritesh and Bouneffouf, Djallel and Mattei, Nicholas and Chandra, Rachita and Madan, Piyush and Varshney, Kush R and Campbell, Murray and Singh, Moninder and Rossi, Francesca},
	year         = 2019,
	journal      = {IBM Journal of Research and Development},
	publisher    = {IBM},
	volume       = 63,
	number       = {4/5},
	pages        = {2--1}
}

@inproceedings{capel2023human,
	title        = {What is human-centered about human-centered {AI}? {A} map of the research landscape},
	author       = {Capel, Tara and Brereton, Margot},
	year         = 2023,
	booktitle    = {Proceedings of the 2023 CHI conference on human factors in computing systems},
	pages        = {1--23}
}

@inproceedings{brighton2006robust,
	title        = {Robust inference with simple cognitive models.},
	author       = {Brighton, Henry},
	year         = 2006,
	booktitle    = {AAAI spring symposium: Between a rock and a hard place: Cognitive science principles meet AI-hard problems},
	pages        = {17--22}
}

@article{holte1993very,
	title        = {Very simple classification rules perform well on most commonly used datasets},
	author       = {Holte, Robert C},
	year         = 1993,
	journal      = {Machine learning},
	publisher    = {Springer},
	volume       = 11,
	pages        = {63--90}
}

@article{csimcsek2015learning,
	title        = {Learning from small samples: An analysis of simple decision heuristics},
	author       = {{\c{S}}im{\c{s}}ek, {\"O}zg{\"u}r and Buckmann, Marcus},
	year         = 2015,
	journal      = {Advances in neural information processing systems},
	volume       = 28
}

@article{brandstatter2006priority,
	title        = {The priority heuristic: making choices without trade-offs.},
	author       = {Brandst{\"a}tter, Eduard and Gigerenzer, Gerd and Hertwig, Ralph},
	year         = 2006,
	journal      = {Psychological review},
	publisher    = {American Psychological Association},
	volume       = 113,
	number       = 2,
	pages        = 409
}

@article{kahneman2005model,
	title        = {A model of heuristic judgment},
	author       = {Kahneman, Daniel and Frederick, Shane},
	year         = 2005,
	journal      = {The Cambridge handbook of thinking and reasoning},
	volume       = 267,
	pages        = 293
}

@book{mccullagh1989generalized,
  title     = {Generalized Linear Models},
  author    = {McCullagh, Peter and Nelder, John A.},
  year      = {1989},
  edition   = {2nd},
  publisher = {Chapman and Hall/CRC},
  address   = {London},
  series    = {Monographs on Statistics and Applied Probability},
  volume    = {37}
}

@inproceedings{ng2000algorithms,
	title        = {Algorithms for inverse reinforcement learning.},
	author       = {Ng, Andrew Y and Russell, Stuart and others},
	year         = 2000,
	booktitle    = {Icml},
	volume       = 1,
	number       = 2,
	pages        = 2
}

@article{christiano2017deep,
	title        = {Deep reinforcement learning from human preferences},
	author       = {Christiano, Paul F and Leike, Jan and Brown, Tom and Martic, Miljan and Legg, Shane and Amodei, Dario},
	year         = 2017,
	journal      = {Advances in neural information processing systems},
	volume       = 30
}

@article{kaufmann2023survey,
	title        = {A survey of reinforcement learning from human feedback},
	author       = {Kaufmann, Timo and Weng, Paul and Bengs, Viktor and H{\"u}llermeier, Eyke},
	year         = 2023,
	journal      = {arXiv preprint arXiv:2312.14925},
	volume       = 10
}

@article{stray2021you,
	title        = {What are you optimizing for? {Aligning} recommender systems with human values},
	author       = {Stray, Jonathan and Vendrov, Ivan and Nixon, Jeremy and Adler, Steven and Hadfield-Menell, Dylan},
	year         = 2021,
	journal      = {arXiv preprint arXiv:2107.10939}
}

@article{leike2018scalable,
	title        = {Scalable agent alignment via reward modeling: A research direction},
	author       = {Leike, Jan and Krueger, David and Everitt, Tom and Martic, Miljan and Maini, Vishal and Legg, Shane},
	year         = 2018,
	journal      = {arXiv preprint arXiv:1811.07871}
}

@inproceedings{pan2022effects,
	title        = {The Effects of Reward Misspecification: Mapping and Mitigating Misaligned Models},
	author       = {Pan, Alexander and Bhatia, Kush and Steinhardt, Jacob},
	year         = 2022,
	booktitle    = {International Conference on Learning Representations}
}

@article{hubinger2019risks,
	title        = {Risks from learned optimization in advanced machine learning systems},
	author       = {Hubinger, Evan and van Merwijk, Chris and Mikulik, Vladimir and Skalse, Joar and Garrabrant, Scott},
	year         = 2019,
	journal      = {arXiv preprint arXiv:1906.01820}
}

@article{zhuang2020consequences,
	title        = {Consequences of misaligned {AI}},
	author       = {Zhuang, Simon and Hadfield-Menell, Dylan},
	year         = 2020,
	journal      = {Advances in Neural Information Processing Systems},
	volume       = 33,
	pages        = {15763--15773}
}

@inproceedings{jacovi2021formalizing,
	title        = {Formalizing trust in artificial intelligence: prerequisites, causes and goals of human trust in {AI}},
	author       = {Jacovi, Alon and Marasovi{\'c}, Ana and Miller, Tim and Goldberg, Yoav},
	year         = 2021,
	booktitle    = {Proceedings of the 2021 ACM conference on fairness, accountability, and transparency},
	pages        = {624--635}
}

@article{wiedeman2020modeling,
	title        = {Modeling of moral decisions with deep learning},
	author       = {Wiedeman, Christopher and Wang, Ge and Kruger, Uwe},
	year         = 2020,
	journal      = {Visual Computing for Industry, Biomedicine, and Art},
	publisher    = {Springer},
	volume       = 3,
	number       = 1,
	pages        = 27
}

@article{ueshima2024discovering,
	title        = {Discovering Novel Social Preferences Using Simple Artificial Neural Networks},
	author       = {Ueshima, Atsushi and Takikawa, Hiroki},
	year         = 2024,
	journal      = {Collabra: Psychology},
	publisher    = {University of California Press},
	volume       = 10,
	number       = 1
}

@article{tversky1990causes,
	title        = {The causes of preference reversal},
	author       = {Tversky, Amos and Slovic, Paul and Kahneman, Daniel},
	year         = 1990,
	journal      = {The American Economic Review},
	publisher    = {JSTOR},
	pages        = {204--217}
}

@article{yu2022neural,
	title        = {Neural and cognitive signatures of guilt predict hypocritical blame},
	author       = {Yu, Hongbo and Contreras-Huerta, Luis Sebastian and Prosser, Annayah MB and Apps, Matthew AJ and Hofmann, Wilhelm and Sinnott-Armstrong, Walter and Crockett, Molly J},
	year         = 2022,
	journal      = {Psychological Science},
	publisher    = {Sage Publications Sage CA: Los Angeles, CA},
	volume       = 33,
	number       = 11,
	pages        = {1909--1927}
}

@book{luce1959individual,
	title        = {Individual choice behavior},
	author       = {Luce, R Duncan and others},
	year         = 1959,
	publisher    = {Wiley New York},
	volume       = 4
}

@article{zaheer2017deep,
	title        = {Deep sets},
	author       = {Zaheer, Manzil and Kottur, Satwik and Ravanbakhsh, Siamak and Poczos, Barnabas and Salakhutdinov, Russ R and Smola, Alexander J},
	year         = 2017,
	journal      = {Advances in neural information processing systems},
	volume       = 30
}

@article{ganzach2001nonlinear,
	title        = {Nonlinear models of clinical judgment: Communal nonlinearity and nonlinear accuracy},
	author       = {Ganzach, Yoav},
	year         = 2001,
	journal      = {Psychological Science},
	publisher    = {SAGE Publications Sage CA: Los Angeles, CA},
	volume       = 12,
	number       = 5,
	pages        = {403--407}
}

@article{zorman1997limitations,
	title        = {The limitations of decision trees and automatic learning in real world medical decision making},
	author       = {Zorman, Milan and {\v{S}}tiglic, Milojka Molan and Kokol, Peter and Mal{\v{c}}i{\'c}, Ivan},
	year         = 1997,
	journal      = {Journal of medical systems},
	publisher    = {Springer},
	volume       = 21,
	number       = 6,
	pages        = {403--415}
}

@article{payne1988adaptive,
	title        = {Adaptive strategy selection in decision making.},
	author       = {Payne, John W and Bettman, James R and Johnson, Eric J},
	year         = 1988,
	journal      = {Journal of experimental psychology: Learning, Memory, and Cognition},
	publisher    = {American Psychological Association},
	volume       = 14,
	number       = 3,
	pages        = 534
}

@article{rosenfeld2012combining,
	title        = {Combining psychological models with machine learning to better predict people’s decisions},
	author       = {Rosenfeld, Avi and Zuckerman, Inon and Azaria, Amos and Kraus, Sarit},
	year         = 2012,
	journal      = {Synthese},
	publisher    = {Springer},
	volume       = 189,
	number       = {Suppl 1},
	pages        = {81--93}
}

@article{dawes1979robust,
	title        = {The robust beauty of improper linear models in decision making.},
	author       = {Dawes, Robyn M},
	year         = 1979,
	journal      = {American Psychologist},
	volume       = 34,
	number       = 7
}

@article{pignatiello2020decision,
	title        = {Decision fatigue: A conceptual analysis},
	author       = {Pignatiello, Grant A and Martin, Richard J and Hickman Jr, Ronald L},
	year         = 2020,
	journal      = {Journal of health psychology},
	publisher    = {SAGE Publications Sage UK: London, England},
	volume       = 25,
	number       = 1,
	pages        = {123--135}
}

@article{hastie2017generalized,
	title        = {Generalized additive models},
	author       = {Hastie, Trevor J},
	year         = 2017,
	journal      = {Statistical models in S},
	publisher    = {Routledge},
	pages        = {249--307}
}

@article{thurstone1927law,
	title        = {A law of comparative judgment},
	author       = {Thurstone, LL},
	year         = 1927,
	journal      = {Psychological Review},
	publisher    = {Psychological Review Company},
	volume       = 34,
	number       = 4,
	pages        = 273
}

@article{mosteller1951remarks,
	title        = {Remarks on the method of paired comparisons: I. The least squares solution assuming equal standard deviations and equal correlations},
	author       = {Mosteller, Frederick},
	year         = 1951,
	journal      = {Psychometrika},
	publisher    = {Springer-Verlag},
	volume       = 16,
	number       = 1,
	pages        = {3--9}
}

@article{carnal1989decomposition,
	title        = {On a decomposition problem for multivariate probability measures},
	author       = {Carnal, H and Dozzi, M},
	year         = 1989,
	journal      = {Journal of multivariate analysis},
	publisher    = {Elsevier},
	volume       = 31,
	number       = 2,
	pages        = {165--177}
}

@article{ewerhart2025possibility,
	title        = {On the (im-)possibility of representing probability distributions as a difference of {i.i.d.} noise terms},
	author       = {Ewerhart, Christian and Serena, Marco},
	year         = 2025,
	journal      = {Mathematics of Operations Research},
	publisher    = {INFORMS},
	volume       = 50,
	number       = 1,
	pages        = {390--409}
}

@book{vonNeumann1944theory,
	title        = {Theory of Games and Economic Behavior},
	author       = {von Neumann, John and Morgenstern, Oskar},
	year         = 1944,
	publisher    = {Princeton University Press}
}

@article{tversky1969intransitivity,
	title        = {Intransitivity of preferences},
	author       = {Tversky, Amos},
	year         = 1969,
	journal      = {Psychological Review},
	volume       = 76,
	number       = 1,
	pages        = {31--48}
}
\bibliographystyle{iclr2026_conference}

\clearpage

\appendix

\pagebreak[4]

\section{Proof Compendium}
\label{appx:proofs}

\thmfactoring*
\begin{proof}

The structure of this result is a bit convoluted, due to its presentation in four parts.
We first show item~\ref{thm:factoring:disc} in isolation using a straightforward explicit construction.
\todo{We then show that the conditions of item~\ref{thm:factoring:trans} imply the conditions of item~~\ref{thm:factoring:disc}, i.e., \emph{symmetry}, and the assumed addition of \emph{continuity} of $f(\cdot, \cdot)$ is sufficient to yield item~\ref{thm:factoring:trans}.}%
We then show that the conditions of item~\ref{thm:factoring:trans} imply \emph{symmetry} of the transitivity law, and the assumed addition of \emph{continuity} of $f(\cdot, \cdot)$ is sufficient to yield item~\ref{thm:factoring:trans}.

From there, we address items~\ref{thm:factoring:uncond}~\&~\ref{thm:factoring:cond}.
Note that because NC is a special case of CIC with condition set $\omega=\emptyset$, we show only item~\ref{thm:factoring:cond}, as item~\ref{thm:factoring:uncond} is a special case of item~\ref{thm:factoring:cond}.

\medskip

\todo{Weaker form only assuming countability:}

We first show item~\ref{thm:factoring:disc}.
For countable $\X$, it's trivially true that each $x \in \X$ can be encoded as a power of $2$.
Then, $\inner(x_{1}) - \inner(x_{2})$ is then 0 iff $x_{1} = x_{2}$, which by axiom~1, necessitates $\outer(0) = \frac{1}{2}$.
Otherwise, $x_{1} \neq x_{2}$, and 
we can recover $x_{1}$ and $x_{2}$ uniquely (the larger 
by rounding the absolute difference up to a the nearest power of 2, i.e., by taking $\log_{2}(\abs{\inner(x_{i}) - \inner(x_{j})})$, the smaller via subtraction, and the sign indicating which is which).
%
Therefore, using this construction of $\inner$, for an arbitrary mapping between $\X$ and $\mathbb{Z}$, we can construct a corresponding $\outer$ that reconstructs any possible decision rule $H(\cdot, \cdot)$ over $\X$.

\todo{From rebuttal:
"The proof for the "Atomic Model" item 1 appears invalid...": Apologies, the result was confusingly presented. We initially thought axiom 2 (weak transitivity) was required, but later found that it was not, yet we neglected to remove the assumption from the statement. Technically speaking, axiom 1 (complementarity) is stronger than necessary, though it can’t be omitted entirely. Assuming complementarity ensures the reflexive property that ½
, which our construction relies upon (we will clarify this detail in the proof), as our construction would not permit 
, and omitting axiom 1 as an assumption would allow for this. We will clarify that this assumption yields this property.

We do see the perspective that the argument feels somewhat trivial, however we don’t think a straightforward proof should detract from the utility of the claim. The straightforward constructive structure of item (1) is not so rare for results on countable domains, such as the famous discrete Deep Sets theorem. The continuous version of the theorem is much deeper (item 2), and relies on complex properties of transitive symmetric functions: the point of item (1) is really to show what is still available under minimal assumptions in the discrete case. We hope that the result is clearer and more satisfying when viewed in this light.
}

\todo{Add the advanced versions}
\todo
{NB: This didn't need WT, nor complementarity, but we can also conclude that $\outer(u) = 1 - \outer(-u)$.}

\todo
{TODO: What about symmetry?
TODO Assuming that $f(u, v)$ is a symmetric monotonically increasing (in both arguments) continuous function, may be enough?
\\
Yes, then apply cont DS theorem. We don't get strict invertibility though.
\\
What if a simple construction works?
\\
}

\todo{Problems enountered below:}

%
\medskip

We now show item~\ref{thm:factoring:trans}.

We first show that these assumptions imply $f(u, v) = f(v, u)$ for all $u, v \in (0, 1)$, i.e., the transitivity law is symmetric.
Assume WLOG $u \geq \frac{1}{2}$ and $v \geq \frac{1}{2}$, as by complementarity and WT, the remaining cases can be derived\todo{How?}.
Now, select an arbitrary $x_{0}$ in $\X$.
For any $\varepsilon \in (0, \frac{1}{2})$, by \emph{codomain span}, there exists an infinite sequence $x_{1}, x_{2}, \dots$ such that $H(x_{i-1}, x_{i}) = \frac{1}{2} + \varepsilon$, and moreover, let $k_{1},k_{2}$ be the smallest integers such that $H(x_{0}, x_{k_{1}}) \geq u$ and $H(x_{0}, x_{k_{1}}) \geq v$, respectively.
Note that for any $k$,
\todo{We haven't shown this, but will later find:
\[
H(x_{0}, x_{k}) = \sigma(k \sigma^{-1}(\frac{1}{2} + \varepsilon)
\enspace,
\]}%
$H(x_{0}, x_{k})$ is an increasing function, achieving $\lim_{k \to \infty} H(x_{0}, x_{k}) = 1$.
Furthermore, by continuity, in the limit as $\varepsilon \to 0$, $H(x_{0}, x_{k_{1}})$ and $H(x_{0}, x_{k_{2}})$ converge to $u$ and $v$, respectively.
However, the grouping of this sequence into a segment of $k_{1}$ points approximating $u$ and $k_{2}$ points approximating $v$ was arbitrary, and the weak transitivity operator $f(\cdot, \cdot)$, by construction, must be associative.
We may thus conclude that, due to continuity and the limit as $\varepsilon \to 0$, $f(u, v) = f(v, u)$.

\todo{Why is k finite sufficient?}

Now observe that, by the continuous deep sets theorem (Theorem~7 of \citet{zaheer2017deep}), since $f(\cdot, \cdot)$ is symmetric, 
there exist continuous functions $\rho,\phi: \R \to \R$ such that
\[
H(x_{1}, x_{3}) = f( H(x_{1}, x_{2}), H(x_{2}, x_{3}) ) = \rho( \phi \circ H(x_{1}, x_{2}) + \phi \circ H(x_{2}, x_{3}) )
\enspace.
\]

Assume WLOG that $\phi(\frac{1}{2}) = 0$ (as any shifting  to $\phi$ can be absorbed by $\rho$ in the composition).

Complementarity implies reflexivity, i.e., it holds that $H(x_{1}, x_{1}) = 1 - H(x_{1}, x_{1}) = \frac{1}{2}$.
Consequently, we observe the f
Now, again by reflexivity, and applying our derived weak transitive law, it holds
\begin{align*}
H(x_{1}, x_{2}) &= \rho( \phi \circ H(x_{1}, x_{2}) + \phi \circ H(x_{2}, x_{2}) ) & \\
 &= \rho( \phi \circ H(x_{1}, x_{2}) ) & \phi \circ H(x_{2}, x_{2}) = \phi(\frac{1}{2}) = 0 \enspace. \\
\end{align*}
Thus $\rho(\phi(u)) = u$ for all $u \in (0, 1)$.

Now, we argue that $\phi(u) + \phi(1 - u) = 0$, which implies $\phi(u) = -\phi(1 - u)$.
Observe the following cancellation: $H(x_{1}, x_{1}) = f(H(x_{1}, x_{2}), H(x_{2}, x_{1})) = \rho(\phi(u) + \phi(1 - u)) = \rho(0) = \frac{1}{2}$.
By continuity, and the property $\rho(\phi(u)) = u$, this implies that $\phi$ is either an increasing or a decreasing function.
Assume now that, WLOG, $\phi$ (and consequently $\rho$) are both increasing functions, as output-negation of $\phi$ may be counterbalanced by input-negation of $\rho$.

To show that $\rho$ and $\phi$ are true inverses, we require also that $\phi(\rho(v)) = v$ for all $v \in \R$.
We show this by proving that $\rho(\cdot)$ is a strictly increasing continuous function on $\R \to (0, 1)$.
Because $\phi$ is a strictly increasing function, and $\rho(\phi(1)) = 1$, clearly $\rho(\infty) = 1$ and $\rho$ is a strictly increasing continuous function \emph{on the domain} $(\phi(0), \phi(1))$.
The only issue is that, if $\phi(1) = -\phi(0) < \infty$, then
$\rho$ is only \emph{weakly increasing} (constant) 
over the rest of $\R$\todo{Check that}.
Now, suppose BWOC that $\phi(1) = c $ for some $c < \infty$.
By codomain span and increasingness of $\rho$, there exist $x_{1}, x_{2}, x_{3}, x_{\infty}$, such that $H(x_{1}, x_{2}) \neq H(x_{1}, x_{3})$, $H(x_{i}, x_{\infty}) = 1$ for each $i$, and the weak transitivity law is violated, as $H(x_{1}, x_{2})$ and $H(x_{1}, x_{3})$ \emph{can not possibly} both be computed as $f(1, 1)$.
NB this impossibility does not depend on the functional form we derive: Rather, there is a fundamental incompatibility between weak transitivity and prediction with certainty.
We thus conclude that the image of $\phi(\cdot)$ is $\R$, hence $\phi$ and $\rho$ are true inverses, and both strictly monotonically increasing continuous functions.
Moreover, $\rho$ takes the form of the CDF of a symmetric continuous random variable with support $\R$.

Henceforth, we have operated purely in terms of the predicted probabilities, i.e., $H(\cdot, \cdot)$, but the result requires us to draw conclusions in terms of some $\inner(x)$.
We now argue for the existence of such a decomposition.
Essentially, we ``eliminate the middle man,'' as
\begin{align*}
H(x_{1}, x_{3}) &= f( H(x_{1}, x_{2}), H(x_{2}, x_{3}) ) & \textsc{Weak Transitivity} \\
&= \rho( \phi \circ H(x_{1}, x_{2}) + \phi \circ H(x_{2}, x_{3}) ) & \textsc{See Above} \\
&= \rho( \phi \circ H(x_{1}, x_{2}) - \phi \circ H(x_{3}, x_{2}) ) & \textsc{Complementarity} \\
&= \rho\left( \left( \inner(x_{1}) - \inner'(x_{2}) \right) - \left( \inner(x_{3}) - \inner'(x_{2}) \right) \right) & \textsc{See Below} \\
&= \rho\left( \inner(x_{1}) - \inner(x_{3}) \right) \enspace. & 
\end{align*}
The step marked \textsc{See Below} is rather subtle, but observe that it must hold BWOC for some functions $\inner,\inner': \X \to \R$\todo{Complex?}, as if it did not, then there would exist some $x_{2}$, $x_{2}'$ such that $H(x_{1}, x_{3}) = f( H(x_{1}, x_{2}), H(x_{2}, x_{3}) ) \neq f( H(x_{1}, x'_{2}), H(x'_{2}, x_{3}) )$, which violates weak transitivity.
Essentially, due to invertibility, $\phi \circ H(x_{1}, x_{3}) = \left( \inner(x_{1}) - \inner'(x_{2}) \right) - \left( \inner(x_{3}) - \inner'(x_{2}) \right)$ must be invariant under choice of $x_{2}$. 
In the subsequent step, $\inner'$ is eliminated, which effectively illustrates that $\inner = \inner'$.
%
%
Finally, let $\outer(u) = \phi(u)$, and 
item~\ref{thm:factoring:disc} is complete.

\todo{Talk about CDF properties!}

\todo{Old junk:
Half Identity: Transitive law $f$ implies
$H(x_{1}, x_{2}) = f(H(x_{1}, x_{2}), H(x_{2}, x_{2})) = f(H(x_{1}, x_{2}), \frac{1}{2}) = H(x_{1}, x_{2})$, thus 
$\phi(p) = $
\\
Cancellation: $H(x_{1}, x_{1}) = f(H(x_{1}, x_{2}), H(x_{2}, x_{1})) = \rho(\sigma(u) + \sigma(1 - u)) = \rho(0) = \frac{1}{2}$
\\
Limit Condition:
Suppose BWOC that $\rho(u) = 1$ for some finite $u$.
\\
Due to half-identity, we may conclude that $\rho(\phi(u) + \phi(\frac{1}{2})) = u$ for any $u \in [0, 1]$ that arises as $u = H(x_{1}, x_{2})$ for some $x_{1}, x_{2} \in \R$.
Thus for all valid $u$, $\phi(u)$ is unique?
PROBLEM: this is true, but $u' = \phi(u_{1}) + \phi(u_{2})$ can be an invalid $u$, necessitating $\rho(u') = \rho(u_{3})$ for some valid $u_{3} \neq u'$. So it's a one-way inverse: $\rho(\phi(u)) = u$, but $\phi(\rho(u)) \neq u$ is possible, i.e., $\rho(u) = \rho(v)$ does not imply $u = v$.
\\
Therefore, WLOG, assume $\phi(\frac{1}{2}) = 0$ and $\rho$ and $\phi$ are inverses of one another (not necessarily continuous or monotonic).
TODO They are monotonic
\\
Now, observe that $\rho(0) = \frac{1}{2}$, and by invertibility and cancellation, we may conclude that $\sigma(u) = -\sigma(1 - u)$.
}

\bigskip

We now show item~\ref{thm:factoring:cond} (noting again that item~\ref{thm:factoring:uncond} is a special case).

\todo{For the form from item 1: it's missing something? item 1 needs a rho(phi( transitivity law form?. Not necessarily invertible.}

We first show a technical lemma: we assume $x'''$ can be obtained from $x_{1}$ by changing two independent features not in the condition set $\omega$, each partial change resulting in $x_{2}'$ and $x_{2}''$ (as in the CIC axiom).
Observe that\todo{Add detail!}\todo{Use $\rho = \sigma$ and $\phi = \sigma^{-1}$}
{\footnotesize
\begin{align*}
\null\hspace{-1cm}\sigma^{-1} \! \circ \! H(x_{1}, x_{2}''') \hspace{-2cm} & \null & \null \\
&=
  \sigma^{-1} \! \circ \! H(x_{1}, x_{2}') + \sigma^{-1} \! \circ \! H(x_{2}', x_{2}''') +
  \sigma^{-1} \! \circ \! H(x_{1}, x_{2}'') + \sigma^{-1} \! \circ \! 
H(x_{2}'', x_{2}''') - \sigma^{-1} \! \circ \! H(x_{1}, x_{2}''') & \hspace{-0.5cm}\textsc{$\sigma$-Transitivity} \\
 &= \left( \sigma^{-1} \! \circ \! H(x_{1}, x_{2}') +  \sigma^{-1} \! \circ \! H(x_{1}, x_{2}'') \right) + \left( \sigma^{-1} \! \circ \! H(x_{2}', x_{2}''') + \sigma^{-1} \! \circ \! 
H(x_{2}'', x_{2}''') + \sigma^{-1} \! \circ \! H(x_{2}''', x_{1}) \right) & \textsc{Algebra} \\
 &= \left( \sigma^{-1} \! \circ \! H(x_{1}, x_{2}') +  \sigma^{-1} \! \circ \! H(x_{1}, x_{2}'') \right) + \null \\
  & \quad \mathsmaller{\frac{1}{2}}\! \left( \sigma^{-1} \! \circ \! H(x_{2}', x_{2}''') + \sigma^{-1} \! \circ \! 
H(x_{2}'', x_{2}''') + \sigma^{-1} \! \circ \! H(x_{2}', x_{1}) + \sigma^{-1} \! \circ \! H(x_{2}'', x_{1}) \right) & \textsc{Algebra} \\
 &= \mathsmaller{\frac{1}{2}} \left( \sigma^{-1} \! \circ \! H(x_{1}, x_{2}') +  \sigma^{-1} \! \circ \! H(x_{1}, x_{2}'') \right) + \mathsmaller{\frac{1}{2}} \left( \sigma^{-1} \! \circ \! H(x_{2}', x_{2}''') + \sigma^{-1} \! \circ \! 
H(x_{2}'', x_{2}''') \right) & \textsc{Algebra} \\
 &= \sigma^{-1} \! \circ \! H(x_{1}, x_{2}') +  \sigma^{-1} \! \circ \! H(x_{1}, x_{2}'') & \hspace{-1cm}\textsc{See Below} 
\end{align*}%
}%
The final step applies the assumed CIC axiom:
The only way this assumption can hold is if $\left(\sigma^{-1} \circ H(x_{1}, x_{2}') + \sigma^{-1} \circ H(x_{1}, x_{2}'') \right) = \left( \sigma^{-1} \circ H(x_{2}', x_{2}''') + \sigma^{-1} \circ H(x_{2}'', x_{2}''') \right)$.
This is because of symmetry, if we negate the equation, flipping $x_{1}$ and $x_{2}'''$, the same must hold, hence the conclusion of the technical lemma.


We now chain the above result to derive the desideratum, i.e., item~\ref{thm:factoring:trans}.
Given $x'_{0}, x'_{1}, x'_{2}, x'_{3}, \dots, x'_{d}$, define the \emph{transitive chain operator} $f(\cdot, \cdot, \cdots, \cdot)$ as
\[
f(x'_{0}, x'_{1}, x'_{2}, x'_{3}, \dots, x'_{d}) = f( \cdots ( f(f(x'_{0}, x'_{1}), x'_{2}), \cdots ), x'_{d}) = \outer \left( \sum_{i=1}^{d} \inner(x'_{i-1}) - \inner(x'_{i}) \right)
\enspace,
\]
where the RHS applies the structure of $\sigma$-transitivity.

Now, take $x'^{i}_{\omega}$ to be $x^{i}_{2}$ if $i \in \omega$, $x^{i}_{1}$ otherwise. Define a sequence $x'_{j}$ such that each $x'_{j}$ takes the previous $x'_{j-1}$ (starting with $x'_{\omega}$ for $j=1$) and changes one feature $k \not \in \omega$ from $x_{1}^{k}$ to $x_{2}^{k}$, such that $x'_{d-\abs{\omega}} = x_{2}$.

We then chain the result over all items not in the condition set $\omega$ to obtain
\begin{align*}
H(x_{1}, x_{2}) &= \outer \left( \inner(x_{1}) - \inner(x_{2}) \right) & \textsc{By Assumption} \\
 &= f(x_{1}, x_{2,\omega}, x_{2,(1)}, x_{2,(2)}, \dots, x_{2,(d-\abs{\omega})}) & \textsc{Transitivity Chain} \\
 &= \outer \left( \inner^{\omega,x_{1}^\omega}(x_{1}) - \inner^{\omega,x_{2}^{\omega}}(x_{2}) + \sum_{i \not \in \omega} \inner^{i,\omega_{i}}(x_{1}^{i}) - \inner^{i,\omega_{i}}(x_{2}^{i}) \right) & \textsc{$\sigma$-Transitivity} \\
 &= \outer \left( \sum_{i \not \in \omega} \inner^{i,\omega_{i}}(x_{1}^{i}) - \inner^{i,\omega_{i}}(x_{2}^{i}) \right) & \textsc{See Below} \\
 &= \outer \left( \sum_{i=1}^{d} \inner^{i,\omega_{i}}(x_{1}^{i}) - \inner^{i,\omega_{i}}(x_{2}^{i}) \right) \enspace. & \hspace{-2cm} \textsc{Let } \inner^{i,\omega_{i}}(x) = 0 \text{ for all } i \in \omega
\end{align*}
Note that the function $\inner$ may change from step to step above, the conclusion is only that such a decomposition exists.
To see the step marked \textsc{See Below}, observe that the terms that condition on $\omega$ are capable of additively representing any function over $\omega$, so the explicit first term, i.e., that involving $\inner^{\omega}$, may be omitted.
Finally, observe that for the special case of noninteractivity, we have $\omega=\emptyset$, thus this first term always cancels out, and the above telescoping decomposition consists of exactly $d$ terms, one per feature.
%
\end{proof}

\bigskip

We now show \cref{thm:models}.
\thmmodels*
\begin{proof}

We first show item~\ref{thm:models:lin}.
We then find that it is more straightforward to show item~\ref{thm:models:mmono}, and finally to conclude item~\ref{thm:models:umono} as a corollary.

\medskip

We first show item~\ref{thm:models:lin}.
Recall that the linearity assumption requires that there exists some $\wv$ such that for all $x_{1}, x_{2}$, it holds
\[
\sigma^{-1} \circ H(x_{1}, x_{2}) = \wv \cdot (x_{1} - x_{2} )
\enspace.
\]

Consequently, the set of all feasible decision rules is parameterized by $\wv$, in particular, applying $\sigma(\cdot)$ to both sides of the above, it holds
\[
\HC = \left\{ x_{1}, x_{2} \mapsto \sigma(\bm{w} \cdot (x_{1} - x_{2}) ) \,\middle|\, \bm{w} \in \R^{d} \right\}
\enspace.
\]
Finally, observe that this multivariate hypothesis class decomposes as
\[
\HC = \left\{ x_{1}, x_{2} \mapsto \sigma\left( \sum_{i=1}^{d} \inner^{i}(x_{1}^{i}) - \inner^{i}(x_{2}^{i}) \right) \,\middle|\, \inner^{i}(x) = \bm{w}_{i} x, \bm{w} \in \R^{d} \right\}
\enspace.
\]
From this, we conclude both the form of $\HC$ and of each $\HCinner^{(i)}$.

NB $\sigma$ of $\sigma$-linearity and $\sigma$-transitivity must be identical over the domain (up to isomorphism), as $\sigma$-linearity and $\sigma$-transitivity would otherwise be mutually incompatible.

\todo{Lipschitz / constrained version? Monotonic version?}
\todo{Kernelized version?}

\medskip

We now show item~\ref{thm:models:mmono}.

Recall that we assumed $\sigma$-transitivity (\cref{thm:factoring}~item~\ref{thm:factoring:trans}).
It thus holds that
\[
\HC = \left\{ x^{(1)}, x^{(2)} \mapsto
  \sigma\left ( \vphantom{\sum}\smash{\sum_{i=1}^{d}} \inner^{i}(x^{(i)}_{1}) - \inner^{i}(x^{(i)}_{2}) \right)
  \,\middle|\, \inner^{i} \in \HCinner^{(i)} \right\}
\enspace.
\]

Monotonicity requires that if $x_{1} \preceq x_{2}$, then
\[
H(x_{1}, x_{2}) \leq \frac{1}{2}
\enspace.
\]
In our factoring, this is equivalent to
\[
\inner(x_{1}) \leq \inner(x_{2})
\]
\todo{Why?}
Consequently, the set of all such functions satisfying monotonicity is
\[
\HC = \{ x_{1}, x_{2} \mapsto \sigma( h(x_{1}) - h(x_{2}) ) \,|\, h: \X \to \R \text{ s.t. } \vec{x} \preceq \vec{y} \implies h(\vec{x}) \leq h(\vec{y}) \}
\enspace.
\]
NB: this does not specify $\sigma$, but due to the assumed factoring, $\sigma$ must be a symmetric continuous CDF with full support.

\medskip

We now show item~\ref{thm:models:umono}.

As we now restore NC, we now begin with the stronger factored form
\[
\HC = \left\{ x^{(1)}, x^{(2)} \mapsto
  \sigma\left ( \vphantom{\sum}\smash{\sum_{i=1}^{d}} \inner^{i}(x^{(i)}_{1}) - \inner^{i}(x^{(i)}_{2}) \right)
  \,\middle|\, \inner^{i} \in \HCinner^{(i)} \right\}
\enspace.
\]
We then apply the same reasoning as in \cref{thm:models:mmono}, now applied to $x_{1}$, $x_{2}$ that differ in only dimension $i$, to conclude that the $i$th model factor obeys
\[
\HCinner^{(i)} = \{ h: \R \to \R \,|\, x \leq y \implies h(x) \leq h(y) \}
\enspace.
\]
This concludes the result.
\todo{This proof lacks detail.}
\end{proof}

\todo{commented notes on probabilistic rankings}

We now show \cref{thm:prop}.
\thmprop*
\begin{proof}
\todo{Clearer notatition.}
Suppose items $A$, $B$, and $C \in \X$.
Essentially, we use the telescoping product
\[
\frac{\Prob(A \text{ best})}{\Prob(C \text{ best})}
= \frac{\Prob(A \text{ best})}{\Prob(B \text{ best})} \frac{\Prob(B \text{ best})}{\Prob(C \text{ best})}
\]

Substituting in the proportionality axiom:

\[
\frac{1}{\sigma(\sigma^{-1}(\Prob(C > B)) + \sigma^{-1}(\Prob(B > A)))} - 1 = \left(\frac{1}{\Prob(B > A)} - 1\right)\left(\frac{1}{\Prob(C > A)} - 1\right)
\]

Rearranging the equation:
\begin{align*}
\sigma(\sigma^{-1}(\Prob(C > B)) + \sigma^{-1}(\Prob(B > A))) &= \frac{1}{1 + \left(\frac{1}{\Prob(B > A)} - 1\right)\left(\frac{1}{\Prob(C > A)} - 1\right)} \\
 &= \frac{1}{1 + \exp \circ \ln \left(\left(\frac{1}{\Prob(B > A)} - 1\right)\left(\frac{1}{\Prob(C > A)} - 1\right) \right)} \\
 &= \logistic\left(\ln \left(\frac{1}{\frac{1}{\Prob(B > A)} - 1}\right) + \ln \left(\frac{1}{\frac{1}{\Prob(C > A)} - 1}\right) \right) \\
 &= \logistic\left(\logit\left(\Prob(B > A)\right) + \logit \left(\Prob(C > A)\right) \right)
 \enspace.
\end{align*}
We thus conclude that $\sigma$ is the logistic function.
\end{proof}

\todo{Notes on other probabilisic rankings, Gaussian model, etc.}

\section{Empirical Analysis -- Other Details} \label{sec:appendix_empirical}

\subsection{Dataset and Preprocessing}

\paragraph{Kidney Allocation Real-World Datasets.}
As mentioned earlier, the real-world dataset we used was collected by \citet{boerstler2024stability} to test moral judgments \textit{stability}.
Across two studies, multiple participants were presented with several kidney allocation scenarios across 10 days (60 scenarios per day). 
In Study One, each kidney allocation scenario contained profiles of two kidney patients (Patient A and Patient B) described by four features: (a) number of dependents (0, 1, 2), (b) life decades gained from kidney transplant (1, 2, 3), (c) number of alcoholic drinks per day (0, 2, 4), and (d) number of crimes committed (0, 1, 2).
In Study Two, each patient is described by five features: (a) number of elderly dependents (0, 1, 2), (b) life years gained from kidney transplant (1, 2, 3), (c) years waiting for the transplant (0, 2, 4), (d) weekly work hours post-transplant (0, 1, 2), and (e) obesity level (0, 1, 2, 3, 4).
Participants were asked to decide who should get the kidney when only one was available and both patients were eligible.
An example of one such pairwise comparison is presented in Figure~\ref{fig:example_ka}.

\begin{figure}
    \centering
    \includegraphics[width=0.5\linewidth]{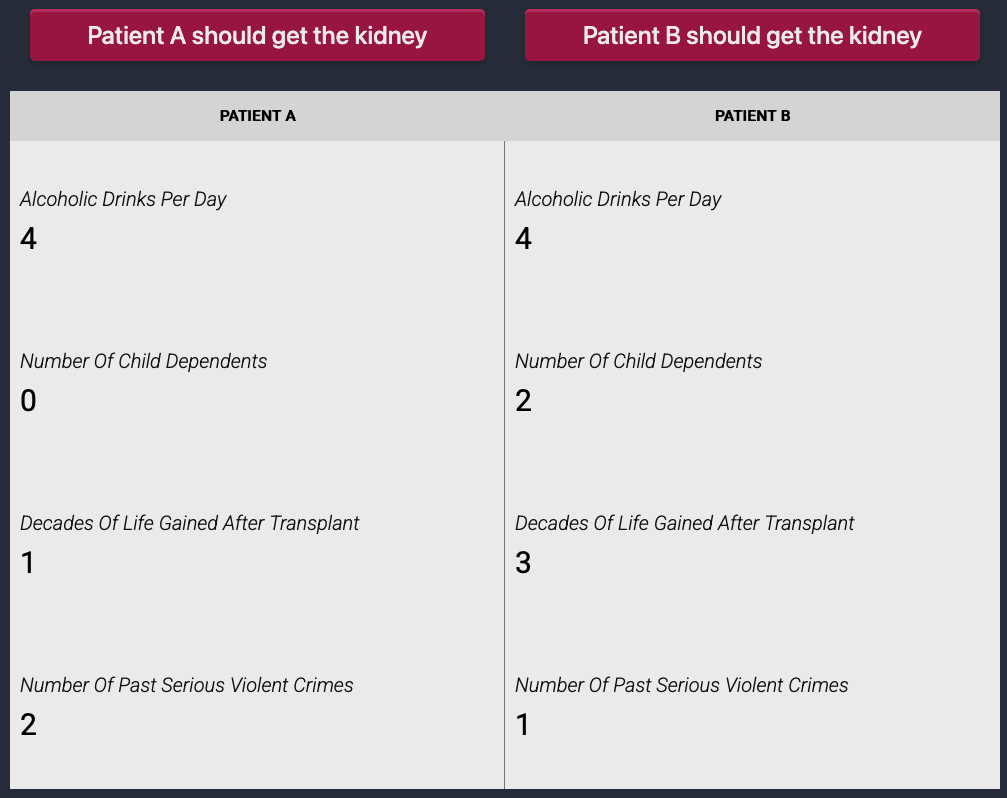}
    \caption{An example pairwise comparison presented to participants of Study One of the Kidney Allocation dataset.}
    \label{fig:example_ka}
\end{figure}

Out of the 60 scenarios presented to each participant per day, several scenarios were repeated.
This includes 6 scenarios that were repeated twice per day across all days to test for response stability and 2 scenarios that were used for attention checks.
The remaining scenarios were randomly chosen.
Since the repeated scenarios were non-random, we removed them from the dataset to ensure that the underlying data distribution for each participant corresponds to the uniform distribution over the input space.
Sessions where participants failed attention checks were also removed from the dataset. 
%


After the above steps, we had around 383 average responses from 15 participants in Study One and around 330 average responses from 40 participants in Study Two.
This dataset is available under the CC-BY 4.0 license and is included with the Supplementary for the sake of completeness.
%

\paragraph{Synthetic Datasets.} The synthetic dataset is constructed by simulating five decision-makers DM1--DM5, each using a different set of heuristics.
The process for DM1 is described in Section~\ref{sec:experiments}.
For DM2--DM5, the simulated decision processes are noted below.
Recall that each decision-maker is presented with a pairwise comparison between two kidney patients, A and B, with features $x_A$ and $x_B$. Each patient is described using the patient's number of dependents, life years gained from the transplant, years waiting for the transplant, and their past number of crimes.

\textit{DM2 simulation.} For DM2, the decision process for any pairwise comparison between A and B can be described as follows: (a) choose the patient with non-zero dependents; (b) if both patients have non-zero dependents, then choose the patient with greater value for life years gained; (c) if equal, then choose the patient who has been waiting longer for the transplant; (d) if equal, then choose randomly.
Hence, this decision-maker also uses thresholds on the number of dependents and employs other features mainly for tie-breaking.

\textit{DM3 simulation.} For DM3, the decision process for any pairwise comparison between A and B can be described as follows: (a) transform life years gained feature to reflect \textit{diminishing returns}, i.e. $z_A^{\text{life gained}} = \lfloor \log(x_A^{\text{life gained}})\rfloor$ and $z_B^{\text{life gained}} = \lfloor \log(x_B^{\text{life gained}})\rfloor$; (b) assign $z_A^{\text{life gained}}$ points to patient A and $z_B^{\text{life gained}}$ to patient B; (c) assign $x_A^{\text{dependents}}$ points to patient A and $x_B^{\text{dependents}}$ points to patient B; (d) assign one point each patient who was been waiting for 5 years or more; (e) sum up the points for each patient and choose one with greater number of points (ties broken randomly).
The log-transformation used in this decision-making process captures the diminishing returns property associated with the life years gained feature. Additionally, this process also uses the tallying heuristic to essentially count the number of factors favoring each patient.

\textit{DM4 simulation.} For DM4, the decision process for any pairwise comparison between A and B can be described as follows: (a) transform life years gained feature to reflect \textit{diminishing returns}, i.e. $z_A^{\text{life gained}} = \lfloor \log(x_A^{\text{life gained}})\rfloor$ and $z_B^{\text{life gained}} = \lfloor \log(x_B^{\text{life gained}})\rfloor$; (b) choose patient with greater $z_\cdot^{\text{life gained}}$ value; (c) if equal, then choose the patient with more dependents; (d) if equal, then choose the patient who has been waiting longer for the transplant; (e) if equal, then choose randomly.
This decision-maker also uses log-transformation for life years gained and other features for tie-breaking.

\textit{DM5 simulation.} For DM5, the decision process for any pairwise comparison between A and B can be described as follows: (a) count how many features favor each patient and choose the patient favored by more features; (b) if equal, choose randomly.
This decision-maker simply uses the tallying heuristic, choosing the option favored by more factors.

\begin{figure}
    \centering
    \includegraphics[width=\linewidth]{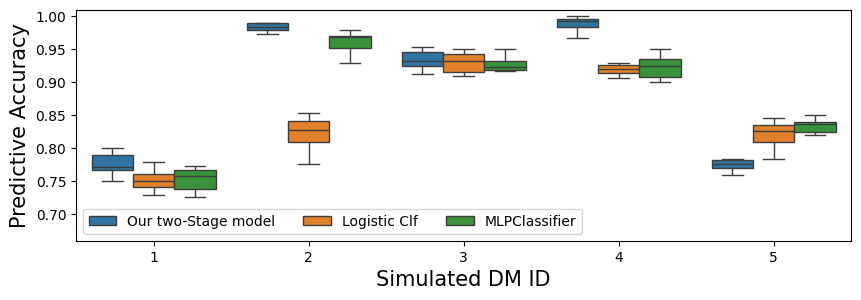}
    \caption{Comparison of our model vs baseline linear regression model for the synthetic dataset containing responses from five simulated decision-makers.}
    \label{fig:study_simulated_factor_vs_baselines}
\end{figure}

\begin{figure}
    \centering
    \includegraphics[width=\linewidth]{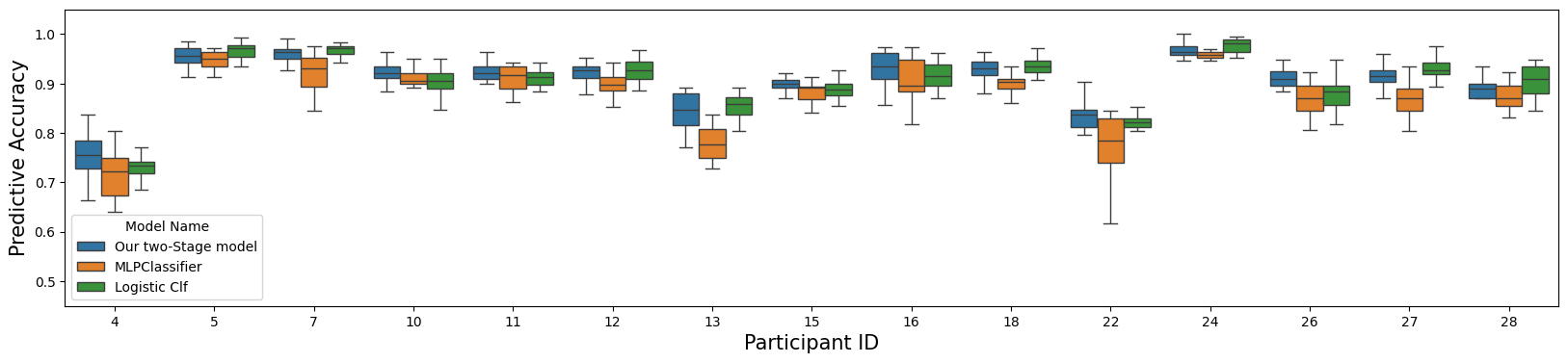}
    \caption{Comparison of our model vs baseline linear regression model for participants in Study One of the Kidney Allocation dataset.}
    \label{fig:study_one_factor_vs_baseline_tallying_up}
\end{figure}

\begin{figure}
    \centering
    \includegraphics[width=\linewidth]{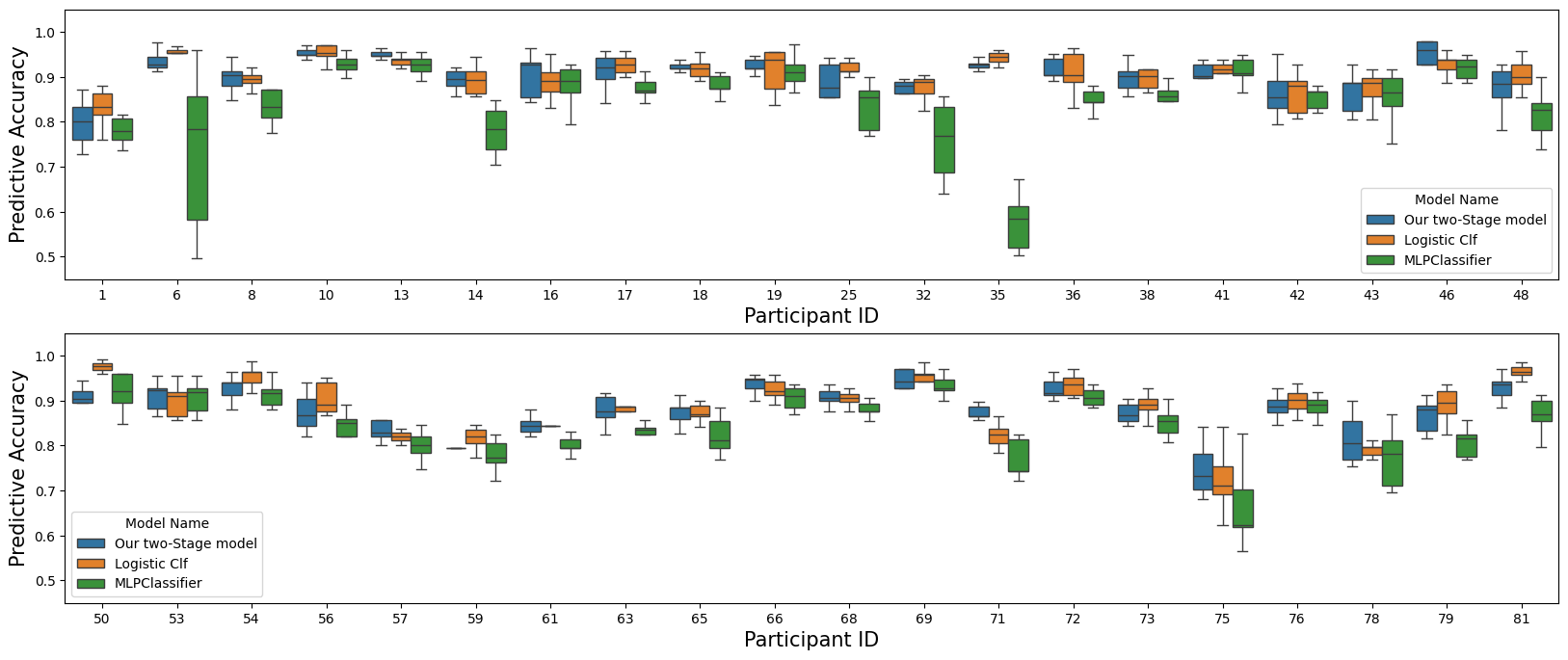}
    \caption{Comparison of our model vs baseline linear regression model for participants in Study Two of the Kidney Allocation dataset.}
    \label{fig:study_two_factor_vs_baseline_tallying_up}
\end{figure}

\begin{figure}
    \centering
    \includegraphics[width=\linewidth]{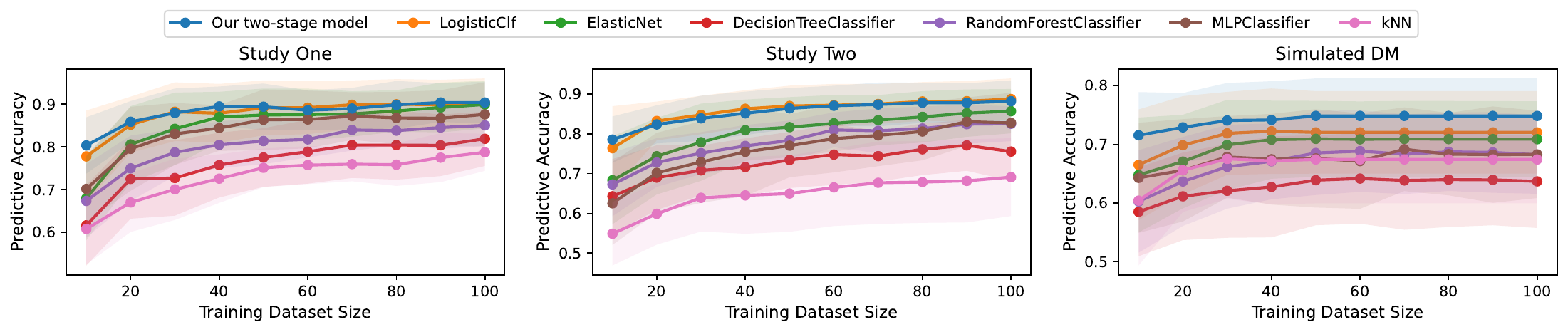}
    \caption{Comparison of our model vs baselines across increasing training size.}
    \label{fig:performance_by_training_size}
\end{figure}

\subsection{Implementation Details}
For each decision-maker (real or simulated), we use a random 70-30 train-test split and report the summary statistics of predictive accuracy over test partitions across 20 repetitions.

\paragraph{Our learning framework.}
For our framework, we learn editing functions $\inner^{i, \cdot}$ for feature $i$ that are designed to assign a score$\in \R$ to each value of feature $i$.
As noted earlier, all $\inner^{\cdot, \cdot}$ are constrained to be monotonic.
Context $\omega$ is limited to one feature for the real-world dataset.
The specific context is chosen using cross-validation. 20\% of the training data is held out to use for validation purposes and the conditional two-stage model is learned for each setting of $\omega = \set{i}$ for all $i\in[d]$.
The context $\omega=\set{i^\star}$ for which we achieve the smallest predictive error over the validation set is chosen as the final context for the corresponding participant.
Context $\omega$ is kept empty for the synthetic dataset.
For the dominance testing function $\outer(\cdot)$, we implement the simple tallying heuristic (i.e., the outputs from each feature-level editing function $\inner^{\cdot, \cdot}$ are simply averaged).
The final binary prediction is basically whether the $\outer(\cdot)$ is greater than 0 or not (with positive value favoring the patient on the left and negative value favoring the one on the right). 

The two-stage model is trained by minimizing the chosen loss function (cross-entropy or hinge) with regularization and constraints.
%
The regularization term quantifies the difference between editing functions learned for different values of the context variable $\omega$.
We use this regularizer to ensure that editing functions for any feature $i$ corresponding to different context values are not too far from each other.
As we noted earlier, $\omega$ is set to contain only one feature in our experiments on real-world datasets. For any two feature values $x^\omega = a$ and $x^\omega = b$, and for any other feature $i \notin \omega$, the regularization term measures $||\inner^{i, a} - \inner^{i, b}||$, for all $a,b$. The difference between the two editing functions is calculated by taking the squared norm of the difference between the outputs of these functions over all values in $\X_i$.
Hence, the overall regularization term we use is 
\[\lambda \cdot \sum_{i\notin\omega}\sum_{a,b} ||\inner^{i, a} - \inner^{i, b}||.\]
Here $\lambda$ is the regularization parameter and is set to be 1e-3 in our experiments.
Additionally, we impose monotonicity constraints on each of $\inner$ functions while optimizing the above loss; i.e., for all features $i$, $\inner^i(a) \geq \inner^i(b), \forall a > b \in \X_i$ OR  $\inner^i(a) \leq \inner^i(b), \forall a > b \in \X_i$ (same constraints for context-based inner functions as well).
We solve this constrained optimization problem using the Python SLSQP library (options 'ftol' and 'maxiter' are set to 1e-7 and 300, respectively).

%

\paragraph{Baseline details.}
The drift diffusion model was implemented using the Python PyDDM library with linear drift functions.
The Bradley-Terry framework was implemented using the Python choix library, with a two-layer neural network scoring function.

We also implemented the following supervised modeling strategies to compare our method against. 
(a) Logistic Classifier -- we implement the standard logistic classification approach, but impose a symmetry constraint by regressing over feature differences across the pairwise comparison; 
(b) Elastic-net Classifier -- using the standard logistic classification approach over all given features with L1 and L2 regularization, with L1 ratio (scaling between L1 and L2 penalties) set to be 0.5.
(c) Decision Tree Classifier -- over all features with Gini splitting criterion;
(d) Linear SVM -- over all features with L2 penalty and squared hinge loss; 
(e) kNN -- over all features with $n=5$ neighbors;
(f) Random Forest Classifier -- over all features with 100 estimators;
(g) MLP Classifier -- over all features with two hidden layers of 10 nodes each.

\paragraph{Computing resources used.} All experiments were run on a MacBook M2 system with a 16GB of random-access memory.

\subsection{Additional Empirical Results}

\paragraph{Performances and models for simulated decision-makers DM2--DM5}.
As mentioned earlier, we also created a synthetic dataset containing responses from five simulated decision-makers.
The descriptions of the simulated DMs are provided above.
In this section, we report additional results for these simulations.

First, the performance of our model, Logistic Classifier, and MLP Classifier are presented in Figure~\ref{fig:study_simulated_factor_vs_baselines}.
Note for all but DM5, our model has better predictive accuracy than baselines.

\paragraph{Individual participant-level performances.}
Across all participants, the performance of our two-stage model, Logistic Classifier, and MLP Classifier are reported in Figure~\ref{fig:study_one_factor_vs_baseline_tallying_up} for Study One and in Figure~\ref{fig:study_two_factor_vs_baseline_tallying_up} for Study Two (other baselines excluded here for presentation clarity).
The plots show that our model has comparable accuracy to the Logistic Classifier for all participants across the real-world study datasets.

\begin{figure}
    \centering
    \includegraphics[width=\linewidth]{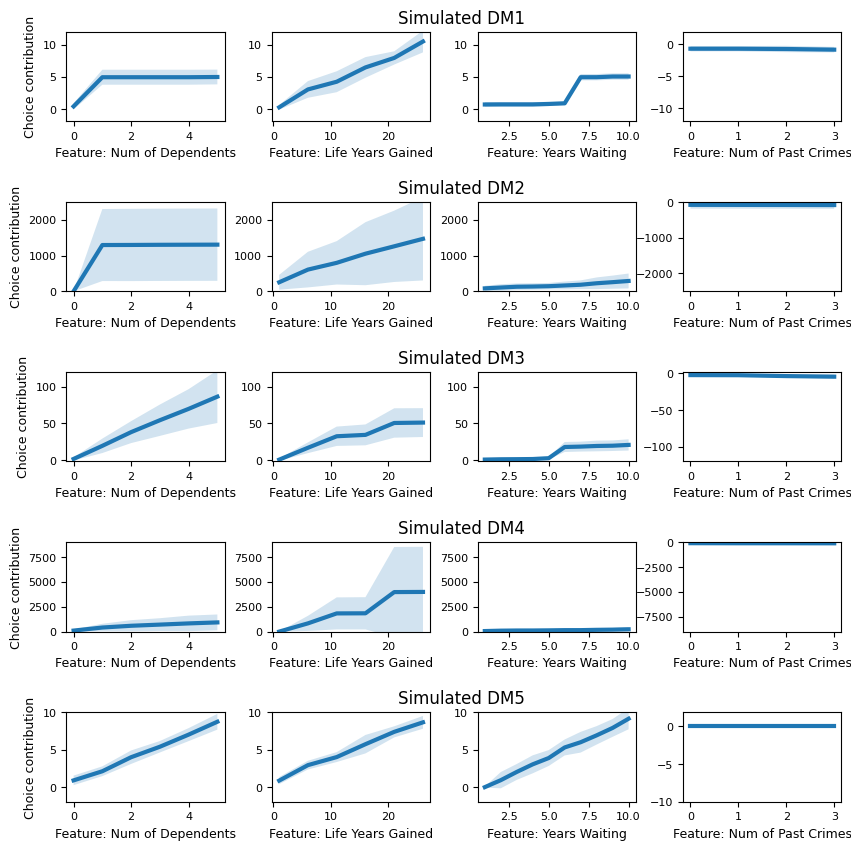}
    \caption{Editing rules learned by our approach for all simulated decision-makers DM1--DM5.}
    \label{fig:editing_rules_simulated}
\end{figure}

\paragraph{Performance variation across training data sizes.}
We also assess the performance of each model across variations of training size from 10 to 100.
The results are reported in Figure~\ref{fig:performance_by_training_size}.
Our model reaches high accuracy levels faster than other models, with significant improvement in training performance compared to baselines for the simulated decision-makers.

\end{document}

\section{Old Intro}

There are several advantages to building computational models of human moral decision-making.
First, they allow us to quantify and study the factors that influence our decisions in various moral scenarios, especially shedding light on moral judgments that rely heavily on heuristic decision-making. These heuristics may not always be obvious; they are used in situations that require rapid decisions, yet are honed with life experience. Understanding them can provide relevant insights into their underlying decision processes.
Second, computational models can act as explanatory frameworks for human decisions, allowing people to validate the factors that influence others' moral decisions and can even serve as a \textit{sounding board} to assess and refine one's decision-making process.
Last but not least, the recent surge in AI capabilities comes with AI's deployment to make or assist human decisions in domains of moral import, e.g., decisions in healthcare, autonomous vehicles, resource allocation, policing, and criminal justice settings.
Computational models of human decision-making can allow for the creation of AI tools that are better aligned with how humans experts make moral decisions as well as provide mechanisms to guardrail against potential unethical AI behavior.

Yet, despite the promising use cases, current computational models of moral decision-making are surprisingly lacking in several prominent ways that limit their applicability in real-world scenarios.
Computational moral decision models developed in prior works do not seem to process information in a manner similar to humans.
They assume that people have stable and static moral preferences, which significantly deviates from reality, given that our moral preferences are continuously evolving.
Hypothesis classes commonly used to computationally model moral decision-making favor simple modeling classes, such as linear models and decision trees, which do not necessarily capture the complexity of human moral decisions or accurately reflect the processes people undertake to make their moral judgments.
%
Moral judgments often involve non-linear computations (e.g., applying value thresholds to assess feature relevance), interactions between features (i.e., the relevance of one feature depending on the value of another), and applications of several heuristics to simplify the information presented during a moral dilemma.
%
%
Some works account for this complexity using more advanced modeling classes, such as neural networks or random forests. However, employing these classes implies sacrificing any implicit requirements of \textit{interpretability} of the decision-making model.
Given these drawbacks of current approaches, we address the following question in this paper: \textit{are there computational modeling strategies that accurately capture the process people use to make their moral judgments and decisions?}

\paragraph{Our Contributions.}
\begin{itemize}
    \item A model for human moral decision-making for moral dilemmas posed as pairwise comparisons. Inspired by prior work on judgment and decision-making, our model frames the human decision-making process as a hierarchical process involving multiple heuristics or decision rules.
    The first-stage decision rules (or editing heuristics) model the feature transformations employed to simplify and prune the provided information in a contextual manner. 
    The second-stage decision rules (or dominance heuristics) are used to make the final decision using the edited information from the first stage.
    \item Statistical learning results ....
    \item We assess the empirical efficacy of our approach over synthetic and real-world datasets. We use datasets containing people's responses to kidney allocation scenarios for real-world data evaluations. For this setting, we show that our method can be used to learn models that infer and explain the decision rules and heuristics that people use in their decision-making process while ensuring that the learned rules fit their responses as well or better than other standard modeling approaches.
\end{itemize}

\paragraph{Additive Rules.}

\paragraph{Extending Bradley-Terry-Luce Rules.}
With an appropriate choice of dominance testing function, $\outer (\cdot)$, it can be seen that our hierarchical model extends the classic Bradley-Terry (BT) model used for ranking options using pairwise comparison data.
BT models are probabilistic models for pairwise comparison outcomes such that, for any pairwise comparison $(p_1, p2)$, the probability of choosing option $p_1$ is considered to be $e^{\beta_{p_1}}/(e^{\beta_{p_1}} + e^{\beta_{p_1}})$, where $\beta_{p_i}$ is the score parameter for element $p_i$ \citep{bradley1952rank}.
%
With this formulation, one can use standard learning methods like \textit{maximum likelihood estimation} to learn the $\beta$ parameters for all options given choices for several pairwise comparisons over the options \citep{newman2023efficient}.

Our model essentially extends the BT model when the dominance testing function is set to the following: $\outer = e^{s(p_1)}/(e^{s(p_1)} + e^{s(p_2)})$. Here, the scoring function $s(\cdot)$ essentially combines information from the first editing phase to assign a certain score to each pairwise comparison.
In other words, $s(p_i) = s'\left( \inner^{1,\Omega}(p_{j,1}), \inner^{2,\Omega}(p_{j,2}), \ldots, \inner^{d,\Omega}(p_{j,d}) \right)$.